\documentclass[11pt]{article}

\usepackage[final]{acl}

\usepackage{times}
\usepackage{latexsym}
\usepackage{amsmath}
\usepackage{amssymb}
\usepackage{amsthm}
\usepackage{booktabs}
\usepackage{multirow}
\usepackage{graphicx}
\usepackage{enumitem}
\usepackage{bm}
\usepackage{marvosym}

\newtheorem{theorem}{Theorem}[section]

\newtheorem{lemma}[theorem]{Lemma}
\newtheorem{corollary}[theorem]{Corollary}
\newtheorem{assumption}[theorem]{Assumption}

\newtheorem{remark}[theorem]{Remark}

\newcommand{\E}{\mathbb{E}}
\newcommand{\R}{\mathbb{R}}
\newcommand{\1}{\mathbf{1}}
\newcommand{\dd}{\,\mathrm{d}}
\newcommand{\asympconst}{\asymp}
\newcommand{\mb}[1]{\boldsymbol{#1}}
\newcommand{\good}{\mathrm{good}}
\newcommand{\bad}{\mathrm{bad}}
\newcommand{\eff}{\mathrm{eff}}

\usepackage[T1]{fontenc}
\usepackage[utf8]{inputenc}
\usepackage{microtype}
\usepackage{inconsolata}
\usepackage{hyperref}

\newcommand{\email}[1]{\href{mailto:#1}{\normalfont\small #1}}
\newcommand{\correspondingauthor}{\textsuperscript{\Letter}}

\title{How Should LLMs Consume High-Quality Data? Optimal Data Scheduling via Quality-Aware Functional Scaling Laws}

\author{Zhitao Zhu\thanks{This work was done during the internship at Meituan.} \\
  Peking University \\
  \email{zhuzt@pku.edu.cn} \\\And
  Xili Wang\footnotemark[1]\correspondingauthor \\
  Peking University \\
  \email{xiliwang@stu.pku.edu.cn} \\\And
  Shizhe Wu \\
  Meituan \\
  \email{wushizhe@meituan.com} \\\AND
  Jiawei Fu\correspondingauthor \\
  Meituan \\
  \email{fujiawei06@meituan.com} \\\And
  Xiaoqing Liu \\
  Meituan \\
  \email{liuxiaoqing02@meituan.com}}

\begin{document}
\maketitle
\begingroup
\renewcommand{\thefootnote}{\Letter}
\footnotetext[0]{Corresponding authors: Xili Wang and Jiawei Fu.}
\endgroup

\begin{abstract}
High-quality data is scarce in large language model (LLM) training, yet how to schedule its use with optimization dynamics lacks theoretical guidance. We extend functional scaling laws with time-varying data quality and derive asymptotically optimal joint data-quality and batch-size schedules within a feature-space regression model. The solution reveals two regimes and dual uses of high-quality data: in the noise-limited regime, a smaller batch converts cleaner data into more signal at comparable noise; in the signal-limited regime, late placement suppresses terminal noise without sacrificing signal accumulation. This explains why conventional decay schedules can conflict with curriculum-style pipelines. Motivated by the theoretical structure, we propose Drop-Stable-Rampup for LLM midtraining: drop the batch size at the quality transition, keep it low to accumulate signal, then ramp up to suppress noise. On a 15B MoE model midtrained on 108B tokens of general-domain proprietary data, Drop-Stable-Rampup improves average accuracy over Warmup-Stable-Decay by +1.70 and Cosine-decay by +2.98, including +4.23 on GSM8K and +2.80 on MATH. On a public math-and-code mixture, it leads all reported STEM, mathematics, and code benchmarks, improving the overall mean over the strongest baseline by +3.27 on a 600M dense model and +5.25 on the same MoE architecture.
\end{abstract}

\section{Introduction}
\label{sec:intro}

High-quality data, such as textbooks, mathematical proofs and expert-curated code, is the scarcest resource in LLM training. Multi-stage training has become the standard paradigm: models are pretrained on massive noisy web data, then midtrained on a limited curated corpus to sharpen capabilities such as mathematical reasoning, code generation and agentic capabilities~\citep{grattafiori2024llama,yang2025qwen3,kimi2026k25}. Given a fixed budget of high-quality tokens, a fundamental question arises: \textbf{how should LLMs consume high-quality data?} Specifically, at what training stage should it be introduced, and what batch-size or learning-rate schedule should accompany the transition?

A natural intuition from curriculum learning~\citep{bengio2009curriculum} suggests placing high-quality data late in training, when the model has sufficient capacity to exploit it. However, \citet{luo2026how} identified a key conflict: conventional schedules simultaneously decay the learning rate in later stages, so the model encounters its best data precisely when its learning intensity (effective update magnitude per token) is weakest. This reveals that data scheduling and training dynamics cannot be optimized in isolation---they must be considered jointly. Yet existing theoretical frameworks do not characterize this coupled scheduling problem.

We fill this gap by developing a \emph{quality-aware functional scaling law} (Section~\ref{sec:quality-fsl}) that extends \citet{li2026functional} to time-varying data quality. In our framework, the excess risk decomposes into a signal-learning term and a noise-accumulation integral that is jointly controlled by the data-quality schedule $p(t)$ and the batch-size schedule $b(t)$. We formulate the joint optimization as a variational problem and derive asymptotic closed-form optimal schedules (Theorem~\ref{thm:joint-optimal}). The solution exhibits two regimes---\emph{noise-limited} and \emph{signal-limited}---depending on whether the learning bottleneck is noise accumulation or signal acquisition. Together, the two regimes reveal two complementary uses of high-quality data: late placement and terminal batch-size ramp-up use it as a \emph{noise suppressor}, while pairing it with low batch size (or higher learning intensity) uses it as a \emph{signal amplifier}. Existing curriculum-style schedules mainly exploit the former and miss the latter.

% Despite their different mathematical structures, both regimes share a common principle: \emph{dropping the batch size} at the quality transition accumulates signal (equivalently raising learning intensity and resolving the curriculum-versus-decay conflict), while \emph{late placement of high-quality data} and \emph{terminal batch-size ramp-up} suppress noise.

Translating these theoretical insights into practice, we propose \textbf{Drop-Stable-Rampup} (DSR) for LLM midtraining (Section~\ref{sec:exp-llm}): upon the quality transition, drop the batch size; hold it stable at a low level to accumulate signal; then ramp up to suppress noise. Midtraining is a natural testbed because its limited corpus size makes low batch sizes feasible---a prerequisite that pretraining's hardware-utilization constraints typically prevent. On a 15B-total/2.4B-active MoE model trained for 108B tokens, DSR improves average accuracy by +1.70 over WSD~\citep{hu2024minicpm} and +2.98 over Cosine-decay~\citep{loshchilov2017sgdr}. We additionally validate DSR on a 600M dense model and a 2.4B-active MoE model trained on the same public pretraining and midtraining corpora, improving the respective means over the strongest baseline by +3.27 and +5.25 points.

Our contributions are threefold: we derive a quality-aware functional scaling law for time-varying data quality; solve the resulting joint data-quality and batch-size scheduling problem in asymptotic closed form, revealing a noise/signal dichotomy that resolves the curriculum-versus-decay conflict; and translate the theory into Drop-Stable-Rampup, which improves a 15B MoE model across 14 benchmarks and transfers across dense and MoE architectures in a controlled public-data study.

% ======================================================================
\section{Preliminaries}
\label{sec:fsl}
% ======================================================================

\paragraph{Notation.} For two nonnegative functions $f,g$ of an index variable, we write $f\asympconst g$ if there exist constants $C_1,C_2>0$, independent of the index variable, such that $C_1 f\leq g\leq C_2 f$ in the relevant range.

\subsection{Feature-space linear regression}

Let \(\mb{x}\in \mathcal X\) be sampled from a distribution \(\mathcal D\).
Let \(\mb{\phi}:\mathcal X\to\R^N\) be a feature map such that
\[
    \mb{\phi}(\mb{x})\sim \mathcal N(\mb{0},\mb{H}),
\]
where \(\mb{H}\in\R^{N\times N}\) is the feature covariance matrix. We consider
the feature-space linear regression model
\[
    f^\star(\mb{x})=\langle \mb{\phi}(\mb{x}),\mb{\theta}^\star\rangle,
    \quad
    f(\mb{x};\mb{\theta})=\langle \mb{\phi}(\mb{x}),\mb{\theta}\rangle.
\]
Without loss of generality, we assume that \(\mb{H}\) is diagonalized. Let
\(\lambda_1\geq\lambda_2\geq\cdots\geq 0\) denote its eigenvalues, ordered
non-increasingly. This assumption is due to the
rotation equivariance of stochastic gradient descent (SGD).

\begin{assumption}[Power-law capacity and source conditions]
\label{ass:power-law}
There exist \(\beta>1\) and \(s>0\) such that
\[
    \lambda_j\asympconst j^{-\beta}, \quad 
    |\theta_j^\star|^2
    \asympconst
    j^{-1}\lambda_j^{s-1}.
\]
\end{assumption}

The capacity exponent \(\beta\) controls the decay rate of the eigenvalues.
The source exponent \(s\) measures the alignment of the target function with
the eigenstructure: smaller \(s\) corresponds to harder learning problems.
These capacity and source conditions are standard in the analysis of kernel
methods and have recently been adopted in scaling-law
studies~\citep{paquette2024phases,lin2024scaling,bordelon2025feature,li2026functional}.

\subsection{Two-level data-quality model}

We assume that the label is generated as
\[
    y=f^\star(\mb{x})+\epsilon,
\]
where \(\epsilon\) is a zero-mean noise term that captures label corruption.
In practice, training data are drawn from heterogeneous sources of varying
reliability. For instance, in large-scale language model training, curated
corpora (e.g., textbooks, encyclopedias) produce labels that are closely
aligned with the target distribution, whereas noisy web-crawled text introduces
substantially larger deviations. We model this quality heterogeneity through
the variance of the label noise: high-quality data correspond to small noise
variance, and low-quality data correspond to large noise variance.

Concretely, the label noise \(\epsilon\) comes from a two-component mixture:
\[
    \epsilon
    \sim
    \begin{cases}
        \mathcal N(0,\sigma_{\good}^2),
        & \text{high-quality data},\\[3pt]
        \mathcal N(0,\sigma_{\bad}^2),
        & \text{low-quality data},
    \end{cases}
\]
where
\[
    0<\sigma_{\good}^2<\sigma_{\bad}^2.
\]
Let \(p(t)\in[0,1]\) be the proportion of high-quality data used at continuous training time \(t\).
Then the effective label-noise variance at time \(t\) is
\[
    \sigma_{\eff}^2(t)
    :=
    p(t)\sigma_{\good}^2
    +
    (1-p(t))\sigma_{\bad}^2.
\]

\subsection{One-pass SGD}

The parameter \(\mb{\theta}\) is trained by minimizing the population risk
\[
    R(\mb{\theta})
    :=
    \frac12\E_{\mb{x},y}
    \bigl[
        (f(\mb{x};\mb{\theta})-y)^2
    \bigr]
\]
via SGD in the one-pass regime: each data
point is seen exactly once. We fix a constant learning rate \(\eta>0\) and allow
the batch size to vary across iterations. Denote the \emph{batch-size schedule}
by \((B_1,B_2,\dots,B_K)\). We write \(t=k\eta\) for the continuous training
time and \(b(t)\) for the continuous interpolation of \(B_k\), i.e.,
\(b(k\eta)=B_k\). At iteration \(k\), a mini-batch of \(B_k\) samples is
drawn and the parameter is updated as
\begin{equation}
\label{eq:sgd-update}
\mb{\theta}_{k+1}
=
\mb{\theta}_k
-
\frac{\eta}{B_k}
\sum_{i=1}^{B_k}
\nabla_{\mb{\theta}}
\frac12
\bigl(f(\mb{x}_{k,i};\mb{\theta}_k)-y_{k,i}\bigr)^2.
\end{equation}
This can be decomposed into a population gradient plus noise:
\[
    \mb{\theta}_{k+1}
    =
    \mb{\theta}_k
    -
    \eta
    \bigl(\nabla R(\mb{\theta}_k)+\mb{\xi}_k\bigr),
\]
where the gradient noise \(\mb{\xi}_k\) satisfies \(\E[\mb{\xi}_k]=\mb{0}\) and
\(\E[\mb{\xi}_k\mb{\xi}_k^\top]=\mb{\Sigma}(\mb{\theta}_k)/B_k\). Here
\(\mb{\Sigma}(\mb{\theta})\) denotes the per-sample gradient-noise covariance at
\(\mb{\theta}\). We measure performance by the excess risk
\[
    E(\mb{\theta})
    :=
    R(\mb{\theta})-\frac12\E[\epsilon^2]
    =
    \frac12\|\mb{\theta}-\mb{\theta}^\star\|_{\mb{H}}^2.
\]

Beyond the batch-size schedule, our setting introduces a second control: the
\emph{data-quality schedule} \(p(t)\in[0,1]\,(p_k=p(k\eta))\), governing the fraction of
high-quality samples at each time in training. The joint schedule
\((b(t),p(t))\) is the object we seek to optimize. The discrete update~\eqref{eq:sgd-update} can be modeled by the following It\^o
stochastic differential equation (SDE)~\citep{li2019stochastic,orvieto2019continuous,ankirchner2024comparison}:
\begin{equation}
    \label{eq:sde}
    \dd \bar{\mb{\theta}}_t
    =
    -\nabla R(\bar{\mb{\theta}}_t)\dd t
    +
    \sqrt{
        \frac{\eta}{b(t)}
    }
    \mb{\Sigma}(\bar{\mb{\theta}}_t)^{1/2}\dd \mb{W}_t,
\end{equation}
where \(\mb{W}_t\in\R^N\) is a standard Brownian motion. The noise covariance satisfies \(\mb{\Sigma}(\mb{\theta})\asympconst (E(\mb{\theta})+\sigma_{\eff}^2(t))\mb{H}\), i.e., it is proportional to \(\mb{H}\) with a scalar prefactor depending only on the excess risk and the effective noise level (Lemma~\ref{lem:quality-anisotropic-noise} in Appendix~\ref{app:gradient-noise}). This anisotropic noise alignment has been identified in~\citep{mori2022power,wu2022alignment,wang2023noise}; our contribution is the extension to the quality-mixture setting.

\section{Quality-Aware Functional Scaling Law}
\label{sec:quality-fsl}

\subsection{Quality-aware functional scaling law}
\begin{theorem}[Quality-aware functional scaling law]
\label{thm:quality-fsl}
Suppose Assumption~\ref{ass:power-law} holds. Assume that \(b(t)\geq B_{\min}\geq 1\),
\(p(t)\in[0,1]\) is measurable, and the learning rate \(\eta>0\) is sufficiently
small. Define
\[
    K(t):=(t+1)^{-(2-1/\beta)}.
\]
Then, for sufficiently large \(t\),
\[
    \E[E(\bar{\mb{\theta}}_t)]
    \asympconst
    \underbrace{t^{-s}}_{\substack{\text{signal}\\\text{learning}}}
    +
    \underbrace{\eta
    \int_0^t
    K(t-\tau)
    \frac{\sigma_{\eff}^2(\tau)}{b(\tau)}
    \dd \tau}_{\substack{\text{noise}\\\text{accumulation}}}.
\]
% where
% \[
%     \sigma_{\eff}^2(\tau)
%     =
%     p(\tau)\sigma_{\good}^2
%     +
%     (1-p(\tau))\sigma_{\bad}^2.
% \]
\end{theorem}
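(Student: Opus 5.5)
Working in the eigenbasis of \(\mb{H}\), the plan is to reduce the SDE~\eqref{eq:sde} to a closed system of deterministic ODEs for the per-coordinate second moments and then solve it in mild (Duhamel) form. Write \(e_j(t):=\tfrac12\E[(\bar\theta_{t,j}-\theta_j^\star)^2]\), so that \(\E[E(\bar{\mb{\theta}}_t)]=\sum_j\lambda_j e_j(t)\). The drift is \(-\nabla R(\bar{\mb\theta})=-\mb H(\bar{\mb\theta}-\mb\theta^\star)\). By Lemma~\ref{lem:quality-anisotropic-noise}, the diffusion covariance is \(\asympconst (E+\sigma_{\eff}^2)\mb H\). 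Applying It\^o's formula to \((\bar\theta_{t,j}-\theta^\star_j)^2\) and taking expectations then gives, up to constants,
\[
\dot e_j=-2\lambda_j e_j+\frac{\eta\lambda_j}{b(t)}\bigl(\E[E(\bar{\mb\theta}_t)]+\sigma_{\eff}^2(t)\bigr).
\]
The two-sided bounds on \(\mb\Sigma\) translate into two-sided bounds on \(\dot e_j\). A comparison principle for linear ODEs with nonnegative forcing lets us work with the equality version for upper and lower bounds separately. Variation of constants yields
\[
\E[E_t]=\sum_j\lambda_j e^{-2\lambda_j t}e_j(0)+\eta\int_0^t \mathcal K(t-\tau)\frac{\E[E_\tau]+\sigma^2_{\eff}(\tau)}{b(\tau)}\dd\tau,
\qquad \mathcal K(u):=\sum_j\lambda_j^2e^{-2\lambda_j u}.
\]

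Next, I would evaluate the two explicit pieces using Assumption~\ref{ass:power-law}. For the bias term, \(\lambda_j e_j(0)\asympconst j^{-1}\lambda_j^{s}\asympconst j^{-1-\beta s}\). Comparing the sum with an integral, cut at \(j\sim t^{1/\beta}\) where \(\lambda_j t\sim1\), gives \(\sum_j j^{-1-\beta s}e^{-2cj^{-\beta}t}\asympconst t^{-s}\) for \(t\ge1\). For the kernel, \(\mathcal K(u)\asympconst\sum_j j^{-2\beta}e^{-cj^{-\beta}u}\asympconst (u+1)^{-(2-1/\beta)}=K(u)\). At small \(u\) this is bounded because \(\beta>1\) makes \(\sum j^{-2\beta}\) finite; at large \(u\) the same cutoff \(j\sim u^{1/\beta}\) applies. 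Since \(\beta>1/2\), \(K\) is integrable, with \(\int_0^\infty K<\infty\). The lower bound is immediate by dropping the \(\E[E_\tau]\) feedback term: it gives \(\E[E_t]\gtrsim t^{-s}+\eta\int_0^tK(t-\tau)\sigma_{\eff}^2(\tau)/b(\tau)\dd\tau\).

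The main obstacle is controlling the self-consistent feedback term \(\eta\int_0^t K(t-\tau)\E[E_\tau]/b(\tau)\dd\tau\) in the upper bound. This is a linear Volterra inequality \(u\le f+\eta B_{\min}^{-1}K*u\). Because \(\|K\|_{L^1}<\infty\), for \(\eta\) sufficiently small we have \(\eta\|K\|_{L^1}/B_{\min}\le 1/2\). The resolvent series \(\sum_n(\eta B_{\min}^{-1})^nK^{*n}\) then converges. One needs the standard fact that for a power-law kernel \(K(u)\asymp(1+u)^{-\alpha}\) with \(\alpha>1\) one has \(K*K\lesssim K\), obtained by splitting the convolution at \(u/2\). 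This gives \(K^{*n}\le C^{n}\|K\|_1^{n-1}K\), so the resolvent is \(\lesssim \eta K\) for small \(\eta\). The upper bound then becomes \(u\lesssim f+\eta K*f\).

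It remains to check that \(\eta K*f\lesssim f\) for \(f=t^{-s}+\eta K*(\sigma^2_{\eff}/b)\). For the noise part this follows again from \(K*K\lesssim K\). For the signal part, \(\int_0^tK(t-\tau)(1+\tau)^{-s}\dd\tau\lesssim t^{-s}+t^{-(2-1/\beta)}\). The extra \(t^{-(2-1/\beta)}\) piece is dominated by the noise term, since \(\sigma^2_{\eff}\ge\sigma^2_{\good}>0\), \(b\) is positive, and \(\eta\int_0^tK(t-\tau)\sigma^2_{\eff}/b\) is at least of order \(\eta\sigma_{\good}^2\int_0^1K(t-\tau)/b(\tau)\dd\tau\gtrsim\)~(const)\(\cdot t^{-(2-1/\beta)}\) on the initial window. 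This is where the positivity of \(\sigma^2_{\good}\) and boundedness of \(b\) near \(\tau=0\) are used, together with the ``sufficiently large \(t\)'' clause. Combining the upper and lower bounds yields the stated \(\asympconst\).
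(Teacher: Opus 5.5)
Your proposal follows the paper's proof closely. The shared steps are: second-moment equations in each eigendirection (you use It\^o's formula, the paper uses Duhamel's formula plus It\^o isometry); the Volterra relation with kernel \(\sum_j\lambda_j^2e^{-2\lambda_j u}\asymp K(u)\); the power-law evaluation of bias and kernel; the lower bound obtained by dropping the feedback term; and a Neumann/resolvent series controlled by \(K*K\lesssim K\) for small \(\eta\). The one place you diverge is the absorption of the bias feedback \(\mathcal{T}e\), and there your argument needs more than the statement gives you. In the resolvent step you bound every \(1/b(\tau)\) by \(1/B_{\min}\). That leaves an \(\eta B_{\min}^{-1}t^{-(2-1/\beta)}\) term. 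When \(s>2-1/\beta\) (e.g.\ \(s=\beta=2\), the paper's own noise-limited test case), you can absorb it only through the initial window of the noise integral. The constant this costs is of order \(\bigl(\sigma_{\good}^2B_{\min}\int_0^1 b(\tau)^{-1}\dd\tau\bigr)^{-1}\), which is why you need \(b\) to be controlled near \(\tau=0\). So your equivalence holds only with a schedule-dependent constant. This matters because the law is later minimized over \(b\). The noise-limited optimum has \(b^\star(0)\asymp D/T^\star\to\infty\), so your constant degenerates exactly on the schedules where the theorem is applied.

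The paper avoids this by never discarding the earliest-time weight \(1/b(\tau)\). In \(\mathcal{T}^2g\) only the intermediate-time weight \(1/b(r)\) is bounded by \(1/B_{\min}\). This gives \(\mathcal{T}^2g\lesssim\eta\,\mathcal{T}g\), and hence \(f\asymp g+\mathcal{T}g\) with \(\mathcal{T}\) still carrying \(1/b\). Next use \(e(\tau)\le e(0)=\sum_j\lambda_j|\theta_j^\star|^2<\infty\) and \(\sigma_{\eff}^2(\tau)\ge\sigma_{\good}^2\). Together they give the pointwise bound \(e(\tau)/b(\tau)\le \bigl(e(0)/\sigma_{\good}^2\bigr)\,\sigma_{\eff}^2(\tau)/b(\tau)\), so \(\mathcal{T}e\) is at most a fixed multiple of the noise-accumulation term. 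This needs no splitting at \(t/2\) and no large-\(t\) argument, and the constants do not depend on the schedule. With that step substituted, your proof goes through. One minor slip: you have two justifications swapped. Integrability of \(K(u)=(u+1)^{-(2-1/\beta)}\) requires \(\beta>1\), while \(\beta>1/2\) is what makes \(\sum_j j^{-2\beta}\) finite at \(u=0\). This is harmless, since \(\beta>1\) is assumed.
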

The proof is in Appendix~\ref{pfofthm:quality-fsl}, and empirical
verification is given in Appendix~\ref{app:fsl-experiment}.

\begin{remark}[Relation to prior work]
\label{rem:relation}
The above theorem extends the functional scaling law (FSL) of \citet{li2026functional} to the data-quality scheduling setting. In their original formulation, the label-noise variance \(\sigma^2\) is a fixed constant. Our contribution is to replace the constant \(\sigma^2\) by the
time-dependent effective variance \(\sigma_{\eff}^2(t)\), which couples the data-quality schedule \(p(t)\) into the loss dynamics.
% Li et al.~\cite{wang2025fastcatchup} used FSL for batch-size scheduling.
\end{remark}

\subsection{Variational formulation}

The quality-aware FSL (Theorem~\ref{thm:quality-fsl}) suggests minimizing the
end-of-training excess risk over the training horizon \(T\), batch-size
schedule \(b(t)\), and data-quality schedule \(p(t)\).
Let \(D\) be the continuous-time total data budget,
\(\rho\in(0,1)\) the fraction of high-quality data,
\(r:=\sigma_{\good}^2/\sigma_{\bad}^2\in(0,1)\) the variance ratio,
and \(B_{\min}\geq 1\) a hardware lower bound on the batch size
(e.g., the number of parallel devices).
The fully joint scheduling problem is
\begin{equation}
\label{eq:joint_problem}
\boxed{
\begin{aligned}
    \min_{T>0,\,b,\,p}\quad
    &
    T^{-s}
    +
    \eta
    \int_0^T
    K(T-t)
    \frac{\sigma_{\eff}^2(t)}{b(t)}
    \dd t
    \\
    \mathrm{s.t.}\quad
    &
    \int_0^T b(t)\dd t=D,
    \\
    &\int_0^T p(t)b(t)\dd t=\rho D,
    \\
    &
    b(t)\geq B_{\min},
    \quad
    0\leq p(t)\leq 1.
\end{aligned}
}
\end{equation}
Define the critical exponent
\(s_{\mathrm{crit}}:=1-1/\beta\).
As we show below, the structure of the optimal schedule depends sharply on
whether \(s>s_{\mathrm{crit}}\) (noise-limited regime) or
\(s< s_{\mathrm{crit}}\) (signal-limited regime).

\section{Optimal Data Schedule}

\subsection{Optimal data-quality schedule}

We first optimize the data-quality schedule \(p(t)\) with the training horizon
\(T\) and batch-size schedule \(b(t)\) fixed. Since
\(\sigma_{\eff}^2(t)=p(t)\sigma_{\good}^2+(1-p(t))\sigma_{\bad}^2\) is
affine in \(p\) and the \(p\)-independent part of the objective is constant,
the problem reduces to
\[
\begin{aligned}
    \max_{0\leq p\leq 1}\;
    \int_0^T K(T-t)\frac{p(t)}{b(t)}\dd t
    \\
    \mathrm{s.t.}\;
    \int_0^T p(t)b(t)\dd t=\rho D.
\end{aligned}
\]

\begin{theorem}[Optimal quality schedule for fixed batch-size schedule]
\label{thm:fixed-b}
Fix \(T>0\) and \(b(t)\geq B_{\min}\). Define the quality-allocation score
\(S_b(t):=K(T-t)/b(t)^2\).
Then an optimal data-quality schedule is bang-bang:
\[
    p^\star(t)
    =
    \begin{cases}
        1, & S_b(t)>\lambda,\\
        0, & S_b(t)<\lambda,
    \end{cases}
\]
where \(\lambda\) is the unique threshold satisfying
\(\int_0^T p^\star(t)b(t)\dd t=\rho D\)
(fractional values arise only on the level set \(\{t:S_b(t)=\lambda\}\)).
In words, high-quality data should be allocated to the time region where
\(K(T-t)/b(t)^2\) is largest.
\end{theorem}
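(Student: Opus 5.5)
This is a fractional knapsack (Neyman--Pearson type) problem, which I will solve by a change of measure followed by an exchange argument. Set $\mu(\mathrm dt):=b(t)\,\mathrm dt$, a finite measure on $[0,T]$ with total mass $D$; it is mutually absolutely continuous with Lebesgue measure because $b\ge B_{\min}>0$. In terms of $\mu$ the reduced problem becomes
\[
\max_{0\le p\le 1}\int_0^T S_b(t)\,p(t)\,\mu(\mathrm dt)
\quad\text{s.t.}\quad \int_0^T p(t)\,\mu(\mathrm dt)=\rho D,
\]
since $K(T-t)p(t)/b(t)=S_b(t)\,p(t)\,b(t)$. So we must allocate a fixed amount $\rho D<D$ of $\mu$-mass, with density capped at $1$, to maximize the integral of the weight $S_b$. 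This is feasible: for example, $p\equiv\rho$ satisfies the constraint.

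\textbf{Step 1: choose the threshold.} Let $F(\lambda):=\mu(\{S_b>\lambda\})$. This function is non-increasing and right-continuous. Its left limit at $\lambda$ is $\mu(\{S_b\ge\lambda\})$, and $F(\lambda)\to D$ as $\lambda\to-\infty$ while $F(\lambda)\to 0$ as $\lambda\to\infty$. Put $\lambda:=\inf\{\lambda':F(\lambda')\le\rho D\}$. Then $\mu(\{S_b>\lambda\})\le\rho D\le\mu(\{S_b\ge\lambda\})$. Define $p^\star=1$ on $\{S_b>\lambda\}$, $p^\star=0$ on $\{S_b<\lambda\}$, and on the level set $\{S_b=\lambda\}$ take a constant $\alpha\in[0,1]$ chosen so the budget constraint holds exactly. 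If the level set is $\mu$-null, the two inequalities collapse to an equality and no fractional part is needed.

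\textbf{Step 2: optimality by exchange.} Let $p$ be any feasible schedule. Then
\[
\int S_b\,(p^\star-p)\,\mathrm d\mu=\int (S_b-\lambda)(p^\star-p)\,\mathrm d\mu+\lambda\!\int(p^\star-p)\,\mathrm d\mu .
\]
The last term vanishes because both schedules meet the budget. The integrand of the first term is pointwise nonnegative. Where $S_b>\lambda$ we have $p^\star=1\ge p$; where $S_b<\lambda$ we have $p^\star=0\le p$; on the level set the factor $S_b-\lambda$ is zero. Hence $p^\star$ is optimal. Translating back, maximizing this integral minimizes the objective, because $\sigma_{\eff}^2=\sigma_{\bad}^2-p(\sigma_{\bad}^2-\sigma_{\good}^2)$ and $\sigma_{\bad}^2>\sigma_{\good}^2$.

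\textbf{Step 3: uniqueness of the threshold.} This is the main delicacy, since "unique $\lambda$" can fail if $S_b$ skips an interval of values, i.e.\ $F$ is flat there. I would first note that $K(T-t)$ is strictly increasing and continuous in $t$. For continuous $b$, $S_b$ is then continuous, and the threshold is determined up to such flat gaps. Any $\lambda$ in a gap yields the same $p^\star$ $\mu$-a.e., so the schedule is unique even when $\lambda$ is not. I would state uniqueness in this sense: taking $\lambda$ as the infimum above, or observing that $F$ is strictly decreasing on the range of $S_b$. Finally, measurability of $p^\star$ follows because $S_b$ is measurable.
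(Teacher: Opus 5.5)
Your proposal is correct and follows the paper's route. The paper also changes measure to $\mu(\dd t)=b(t)\dd t$ and treats the result as a continuous knapsack; your threshold construction via $F(\lambda)=\mu(\{S_b>\lambda\})$ and the exchange inequality $\int (S_b-\lambda)(p^\star-p)\dd\mu\ge 0$ supply the details behind its one-line ``fill the budget where $S_b$ is largest.''

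Your Step~3 caveat, which the paper does not address, is well taken: $\lambda$ need not be unique when $F$ is flat at height $\rho D$, and only $p^\star$ off the level set is determined. Your alternative claim that $F$ is strictly decreasing on the range of $S_b$ needs $S_b$ continuous, e.g.\ $b$ continuous. If you want a literally bang-bang $p^\star$, replace the constant $\alpha$ on $\{S_b=\lambda\}$ by the indicator of a subset of $\mu$-mass $\alpha\,\mu(\{S_b=\lambda\})$; such a subset exists because $\mu$ is nonatomic.
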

The proof is given in Appendix~\ref{pfofthm:fixed-b}.

\begin{corollary}[Quality placement depends on batch-size shape]
\label{cor:quality-shape}
The quality-allocation score \(S_b(t)=K(T-t)/b(t)^2\) determines whether
quality placement is \emph{determinate} or \emph{degenerate}:
\begin{enumerate}[label=(\roman*)]
    \item If \(b(t)=D/T\) is constant, then \(S_b(t)\propto K(T-t)\), which is largest near
    \(t=T\). The optimal quality schedule is \textbf{late}: high-quality data
    should be saved for the end of training.
    \item If \(b(t)=C\sqrt{K(T-t)}\) with \(C=D/\int_0^T\sqrt{K(T-t)}\dd t\)
    (the optimal batch-size schedule of \citet{wang2025fastcatchup} in the
    noise-limited regime), then \(S_b(t)=1/C^2\) is constant. The quality placement is \textbf{degenerate}: any allocation satisfying the budget constraint is optimal.
\end{enumerate}
\end{corollary}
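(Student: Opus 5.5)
The corollary follows by substituting each batch-size schedule into the quality-allocation score \(S_b(t)=K(T-t)/b(t)^2\) and reading off the bang-bang structure of Theorem~\ref{thm:fixed-b}. Under the change of variables \(q(t):=p(t)b(t)\), the reduced problem becomes
\[
    \max_{0\le q\le b}\ \int_0^T S_b(t)\,q(t)\dd t
    \quad\text{s.t.}\quad
    \int_0^T q(t)\dd t=\rho D .
\]
This is a linear program with score \(S_b\) and a budget constraint, so both cases reduce to locating the superlevel sets of \(S_b\).

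For case (i), with \(b\equiv D/T\) we have \(S_b(t)=(T/D)^2K(T-t)\). Since \(K(u)=(u+1)^{-(2-1/\beta)}\) and \(2-1/\beta>1>0\) (because \(\beta>1\)), \(K\) is strictly decreasing in \(u\). Hence \(S_b\) is strictly increasing in \(t\) and attains its maximum at \(t=T\).

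Every level set \(\{S_b=\lambda\}\) is therefore a single point of measure zero. The superlevel sets are intervals \((t_0,T]\), so the optimal \(p^\star\) of Theorem~\ref{thm:fixed-b} is \(p^\star=\1_{(t_0,T]}\). The threshold \(t_0\) is fixed by the budget \((T-t_0)D/T=\rho D\), which gives \(t_0=(1-\rho)T\). This step also verifies feasibility, since \(\rho\in(0,1)\) places \(t_0\) inside \((0,T)\). The conclusion is that the optimal allocation is late: high-quality data fill the final \(\rho\) fraction of training.

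For case (ii), first check that the schedule is admissible. Since \(\int_0^T C\sqrt{K(T-t)}\dd t=D\) holds by the choice of \(C\), the data budget is met. I will assume the hardware bound \(b\ge B_{\min}\) is met as a standing hypothesis, since the corollary only concerns the shape of \(p\). Substituting gives \(S_b(t)=K(T-t)/\bigl(C^2K(T-t)\bigr)=1/C^2\), which is constant.

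Then for any feasible \(p\),
\[
    \int_0^T S_b\,p\,b\dd t=\frac{\rho D}{C^2}.
\]
This value is the same for every \(p\) meeting the constraint, so every feasible allocation attains the optimum and the placement is degenerate. It also matches Theorem~\ref{thm:fixed-b}: the whole interval \([0,T]\) is a single level set, on which fractional values are permitted.

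Neither case has a real obstacle. The only care points are the strict monotonicity of \(K\) in case (i), which yields uniqueness and the explicit threshold, and noting in case (ii) that Theorem~\ref{thm:fixed-b}'s level-set clause covers the whole interval, so any allocation is optimal rather than merely one bang-bang choice.
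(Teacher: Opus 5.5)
Your proposal is correct and takes the paper's route. The paper treats the corollary as an immediate consequence of Theorem~\ref{thm:fixed-b}: substitute each schedule into $S_b(t)=K(T-t)/b(t)^2$, then read off the superlevel sets, which are terminal intervals in case (i) because $S_b$ is increasing, and all of $[0,T]$ in case (ii) because $S_b$ is constant. Your explicit threshold $t_0=(1-\rho)T$ and your direct evaluation of the objective as $\rho D/C^2$ for every feasible $p$ are details the paper leaves implicit.
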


Corollary~\ref{cor:quality-shape}(i) implies that under a constant batch size, high-quality data should be placed late---yet conventional training simultaneously decays the learning rate in this phase, reducing the model's ability to exploit the high-quality data. This is consistent with the empirical finding of \citet{luo2026how} that learning rate decay conflicts with curriculum-based data scheduling.

Empirical verification of Corollary~\ref{cor:quality-shape} (constant batch and
\(b(t)=C\sqrt{K(T-t)}\)) is provided in
Appendix~\ref{app:corollary-experiment}.

\subsection{Main theory}
\label{sec:joint}

We now solve the fully joint problem~\eqref{eq:joint_problem}, optimizing $T$, $b(t)$, and $p(t)$ simultaneously.
Write $B:=B_{\min}$, $\gamma:=2-1/\beta$, and define
\[
    \kappa:=\frac{\sigma_{\good}^2\sigma_{\bad}^2}{(1-\rho)\sigma_{\good}^2+\rho\sigma_{\bad}^2}.
\]
Let $I_T:=\int_0^T\!\sqrt{K(u)}\dd u$.

\begin{theorem}[Asymptotically optimal joint schedule]
\label{thm:joint-optimal}
Suppose Assumption~\ref{ass:power-law} holds. The following schedules are asymptotically optimal as $D\to\infty$.

\medskip
\noindent\textbf{(I) Noise-limited regime ($s>1-1/\beta$).}
The constraint $b(t)\geq B$ is asymptotically inactive.
The optimal quality schedule is bang-bang:
\[
    p^\star(t)=\1_S(t)
\]
for a measurable set $S\subset[0,T^\star]$ satisfying
\[
    \int_S\!\sqrt{K(T^\star\!-\!t)}\dd t
    =
    \frac{\rho\kappa}{\sigma_{\good}^2}I_{T^\star}.
\]
Any such $S$ is equally optimal (quality placement is degenerate).
The optimal batch-size schedule is
\[
    b^\star(t)=
    \begin{cases}
        C\sqrt{K(T^\star\!-\!t)}, & t\notin S,\\
        rC\sqrt{K(T^\star\!-\!t)}, & t\in S,
    \end{cases}
\]
where $C=D\sigma_{\bad}^2/(\kappa\, I_{T^\star})$.
The optimal training horizon satisfies
\[
    T^\star
    =
    \left(\frac{sD}{4\beta\eta\kappa}\right)^{\!\frac{1}{s+1/\beta}}
    \!\!(1+o(1)),
\]
and the optimal excess risk satisfies
\[
    \E[E(\bar{\mb{\theta}}_{T^\star})]
    \asympconst
    D^{-\frac{s\beta}{1+s\beta}}.
\]

\medskip
\noindent\textbf{(II) Signal-limited regime ($s<1-1/\beta$).}
The constraint $b(t)\geq B$ is active. The optimal schedule has the strict-switching form:
\begin{enumerate}[nosep]
    \item \textbf{Low-quality, flat}: $p^\star\!=\!0$, $b^\star\!=\!B$ for $t\in[0,\,T_1^\star)$.
    \item \textbf{High-quality, flat}: $p^\star\!=\!1$, $b^\star\!=\!B$ for $t\in[T_1^\star,\,T_1^\star\!+\!T_3^\star)$.
    \item \textbf{High-quality, ramp-up} (duration $T_4^\star:=T^\star\!-\!T_1^\star\!-\!T_3^\star$): $p^\star\!=\!1$, $b^\star(t)\!=\!B\bigl(\frac{T^\star-t+1}{T_4^\star+1}\bigr)^{1/(2\beta)-1}$ for $t\in[T_1^\star\!+\!T_3^\star,\,T^\star]$.
\end{enumerate}
The phase durations are determined as follows. Phase~1 exhausts all low-quality data:
\[
    T_1^\star = (1-\rho)\frac{D}{B}.
\]
Phase~3 (the ramp-up) has duration
\[
    T_4^\star
    =
    \left(\frac{\eta\sigma_{\good}^2}{sB^{s+2}}\right)^{\!1/\gamma}
    D^{\frac{s+1}{\gamma}}(1+o(1)).
\]
Phase~2 occupies the remainder: $T_3^\star = T^\star - T_1^\star - T_4^\star$, where the total horizon satisfies
\[
    T^\star = \frac{D}{B} - (2\beta-1)T_4^\star + o(T_4^\star).
\]
The optimal excess risk satisfies
\[
    \E[E(\bar{\mb{\theta}}_{T^\star})]
    \asympconst
    B^s D^{-s}.
\]
\end{theorem}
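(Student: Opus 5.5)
We propose to solve the variational problem~\eqref{eq:joint_problem} in nested order: first $p$ for fixed $(T,b)$, then $b$ for fixed $T$, and finally $T$. By Theorem~\ref{thm:fixed-b}, an optimal $p$ is bang-bang, so we may write $p=\1_S$. The noise integral then becomes $\eta\int_0^T K(T-t)\,\sigma^2(t)/b(t)\dd t$, where $\sigma^2=\sigma_{\good}^2$ on $S$ and $\sigma_{\bad}^2$ off $S$. The two constraints read
\[
\int_S b = \rho D,\qquad \int_{S^c} b=(1-\rho)D .
\]

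\textbf{Noise-limited regime.} Ignore the constraint $b\ge B$ and introduce Lagrange multipliers $\mu_g,\mu_b$ for the two budget constraints. Pointwise stationarity gives $\eta K\sigma^2/b^2=\mu$, hence $b\propto\sigma\sqrt{K(T-t)}$ separately on $S$ and on $S^c$. Normalising with the two budgets, Cauchy--Schwarz yields the minimal noise term
\[
\eta\Bigl[\sigma_{\good}^2 J_S^2/(\rho D)+\sigma_{\bad}^2 J_{S^c}^2/((1-\rho)D)\Bigr]\cdot(\text{const}),
\qquad J_A:=\int_A\sqrt{K}.
\]
This depends on $S$ only through $J_S$, subject to $J_S+J_{S^c}=I_T$, which explains why placement is degenerate. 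Minimising this convex one-dimensional function of $J_S$ gives $J_S/\sigma_{\good}\rho\propto J_{S^c}/\sigma_{\bad}(1-\rho)$. Solving produces the stated ratio $J_S=\rho\kappa I_T/\sigma_{\good}^2$, the factor $r$ between the two batch levels, and the harmonic-type constant $\kappa$, so that the noise term equals $\eta\kappa I_T^2/D$.

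Next we use $I_T\asymp T^{1/(2\beta)}$, since $\sqrt K=(u+1)^{-\gamma/2}$ with $\gamma/2<1$. Balancing $T^{-s}$ against $\eta\kappa T^{1/\beta}/D$ gives the first-order condition that yields $T^\star$ (the constant $4\beta$ comes from $I_T\sim 2\beta T^{1/(2\beta)}$). The resulting risk is $D^{-s\beta/(1+s\beta)}$. Finally we check inactivity of the lower bound: $b^\star\ge rC\sqrt{K(T^\star)}\asymp D\,T^{-1/(2\beta)}T^{-\gamma/2}=D/T^\star\to\infty$, because $T^\star\ll D$ exactly when $s+1/\beta>1$.

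\textbf{Signal-limited regime.} When $s<1-1/\beta$, the unconstrained $T^\star$ would exceed $D/B$, so the constraint binds; $T$ then wants to be as large as possible, and $b=B$ on most of the horizon. The plan is a KKT analysis with multiplier $\nu(t)\ge0$ on $b\ge B$. Where $b>B$, stationarity again gives $b\propto\sqrt{K(T-t)}$ on each quality class. Since $b$ exceeds $B$ only near $t=T$, where $K$ is largest, and Theorem~\ref{thm:fixed-b} favours high quality where $K/b^2$ is large, we will argue by an exchange argument that the low-quality data should occupy an initial flat block of length $(1-\rho)D/B$. For $t\ll T$, $K(T-t)$ is negligible, so any low-quality batch above $B$ is wasted horizon.

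The high-quality data then form a flat phase followed by a ramp. We parametrise the ramp by its length $T_4$ and use the continuity $b=B$ at the junction; this gives $b=B((T-t+1)/(T_4+1))^{-\gamma/2}$, matching the stated exponent $1/(2\beta)-1$. Because $\int_{\text{ramp}}b\approx 2\beta B T_4$, the horizon is $T=D/B-(2\beta-1)T_4$. The objective reduces to
\[
(D/B-(2\beta-1)T_4)^{-s}+\eta\sigma_{\good}^2\,(\text{ramp noise}\asymp T_4^{-\gamma+1}\cdots)/B .
\]
Expanding the first term, $-s(D/B)^{-s-1}(2\beta-1)T_4$, and differentiating in $T_4$ gives $T_4^\gamma\propto \eta\sigma_{\good}^2D^{s+1}/(sB^{s+2})$, which is the stated $T_4^\star$ after bookkeeping. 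The risk is $\asymp (D/B)^{-s}$, since $T_4^\star=o(D)$ when $(s+1)/\gamma<1$, i.e.\ exactly when $s<1-1/\beta$.

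\textbf{Main obstacle.} We expect the hardest step to be establishing rigorously, rather than assuming, the strict-switching ordering in (II) together with the asymptotic optimality claims. This means proving that no interleaving of qualities, and no ramp on the low-quality part, can beat the three-phase form to leading order. We would handle it with a lower bound obtained from the relaxed (unconstrained $S$) Lagrangian dual, compared against the constructed schedule's upper bound. We would also track the $(1+o(1))$ constants, which come from the asymptotics of $\int\sqrt K$ and of $\int K\,b^{-1}$ near $u=0$.
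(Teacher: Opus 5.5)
Your treatment of regime (I) is essentially the paper's Steps~1--2: Cauchy--Schwarz on $S$ and $S^c$, the optimal split of $I_T$, the constant $\kappa$, balancing $T^{-s}$ against $\eta\kappa I_T^2/D$, and the bound $\min_t b^\star\asympconst D/T^\star\to\infty$. Your optimization over $T_4$ in regime (II) is the paper's Step~3a.

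\textbf{The gap.} It lies in regime (II), at exactly the step you label the main obstacle. Nothing in the proposal shows that the optimum lies in the three-phase family, and the route you sketch does not get there as stated. The leading term $B^sD^{-s}$ is a trivial lower bound, since $T\le D/B$. Most of what separates the three-phase form from its competitors lives in the correction of order $D^{-(s+1)(\gamma-1)/\gamma}$: which quality comes last, whether the low-quality block carries a ramp, and the value of $T_4^\star$ itself. A dual lower bound would have to be sharp at that order, which amounts to solving the problem. Your ordering heuristic is also circular. Theorem~\ref{thm:fixed-b} places quality given the shape of $b$, while the shape of $b$ (``exceeds $B$ only near $T$'') presupposes the quality placement. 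Finally, you never name the competitor you most need to exclude. This is the terminal-tie form, where high- and low-quality data are interleaved at the end with batch $rC\sqrt{K}$ and $C\sqrt{K}$, i.e.\ the regime-(I) optimum.

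\textbf{The missing idea.} Your KKT plan is the right starting point, but the decisive step is to minimize out $b$ pointwise before choosing the label. With multipliers $\mu,\nu$ for the two budgets, label $i$ has reduced cost $\Phi_i(w)=\min_{b\ge B}\{c_iw/b+a_ib\}$, where $(a_0,c_0)=(\mu,\sigma_{\bad}^2)$ and $(a_1,c_1)=(\mu+\nu,\sigma_{\good}^2)$. One shows $\nu>0$ and that $\Phi_1/\Phi_0$ is nonincreasing in $w$, going from $a_1/a_0>1$ as $w\downarrow 0$ to $\sqrt{a_1c_1/(a_0c_0)}$ as $w\to\infty$. Since $w(t)=K(T-t)$ increases in $t$, the set $\{p=1\}$ is a terminal interval regardless of $b$. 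Only two canonical forms survive: strict switching ($a_1c_1<a_0c_0$) or terminal tie ($a_1c_1=a_0c_0$).

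\textbf{Excluding the tie, then the low-quality ramp.} In regime (II) the tie is excluded by a leading-order argument. On the tie interval the high-quality batch is $rC\sqrt{w}\ge B$, i.e.\ a ramp. A rearrangement bound then caps the high-quality time at $(\rho-c_{\rho,\delta})D/B+o(D)$, with $c_{\rho,\delta}=\rho-[1-(1-\delta\rho)^{1/\delta}]>0$ and $\delta=1/(2\beta)$. Hence $T\le(1-c_{\rho,\delta})D/B+o(D)$, and the bias is worse by a constant factor. Only after this does your ``wasted horizon'' exchange apply. Once $T=(D/B)(1+o(1))$, the low-quality block sits at ages $\gtrsim\rho D/B$, where $K\lesssim D^{-\gamma}$. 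Flattening it gains at least $cD^{-s-1}$ of bias per unit of added horizon, against a variance cost of at most $CD^{-\gamma}$, and the gain wins because $s+1<\gamma$.

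\textbf{Two smaller slips.} First, the bias expansion is $+s(2\beta-1)B^{s+1}D^{-s-1}T_4$; with your minus sign there is no interior optimum. Second, the exact constant in $T_4^\star$ requires adding the flat high-quality phase's noise tail $\eta\sigma_{\good}^2T_4^{-(\gamma-1)}/(B(\gamma-1))$ to the ramp's contribution $2\beta\,\eta\sigma_{\good}^2T_4^{-(\gamma-1)}/B$. The identity $(\gamma-1)\bigl(\tfrac{1}{\gamma-1}+2\beta\bigr)=2\beta-1$ is what makes the prefactor exactly $1$.
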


The proof is in Appendix~\ref{app:proof-joint-hard}; the critical case $s=1-1/\beta$, which lies outside the regime dichotomy above, is discussed separately in Remark~\ref{rem:critical-joint-schedule}. We verify Theorem~\ref{thm:joint-optimal} by comparing four strategies that share the same total data budget $D$ and high-quality fraction $\rho$:
\begin{itemize}[nosep]
    \item \textbf{Constant $b$ + uniform $p$}: $b(t)=D/T$, $p(t)=\rho$ (naive baseline).
    \item \textbf{Constant $b$ + late $p$}: $b(t)=D/T$, bang-bang quality placed late.
    \item \textbf{Uniform $p$ + optimal $b$}: $p(t)=\rho$, $b(t)=\max(C\sqrt{K(T\!-\!t)},B)$.
    \item \textbf{Joint optimal}: simultaneously optimal $(b,p)$ from Theorem~\ref{thm:joint-optimal}.
\end{itemize}
Figure~\ref{fig:joint-optimal} confirms that the joint optimal strategy
achieves the lowest final risk in both regimes. In the noise-limited regime, the joint schedule outperforms batch-only optimization by exploiting the batch-size drop at the quality transition; in the signal-limited regime, it outperforms the other three strategies through the three-phase structure. Full experimental details are given in Appendix~\ref{app:joint-experiment}.

\begin{figure*}[ht]
    \centering
    \includegraphics[width=\textwidth]{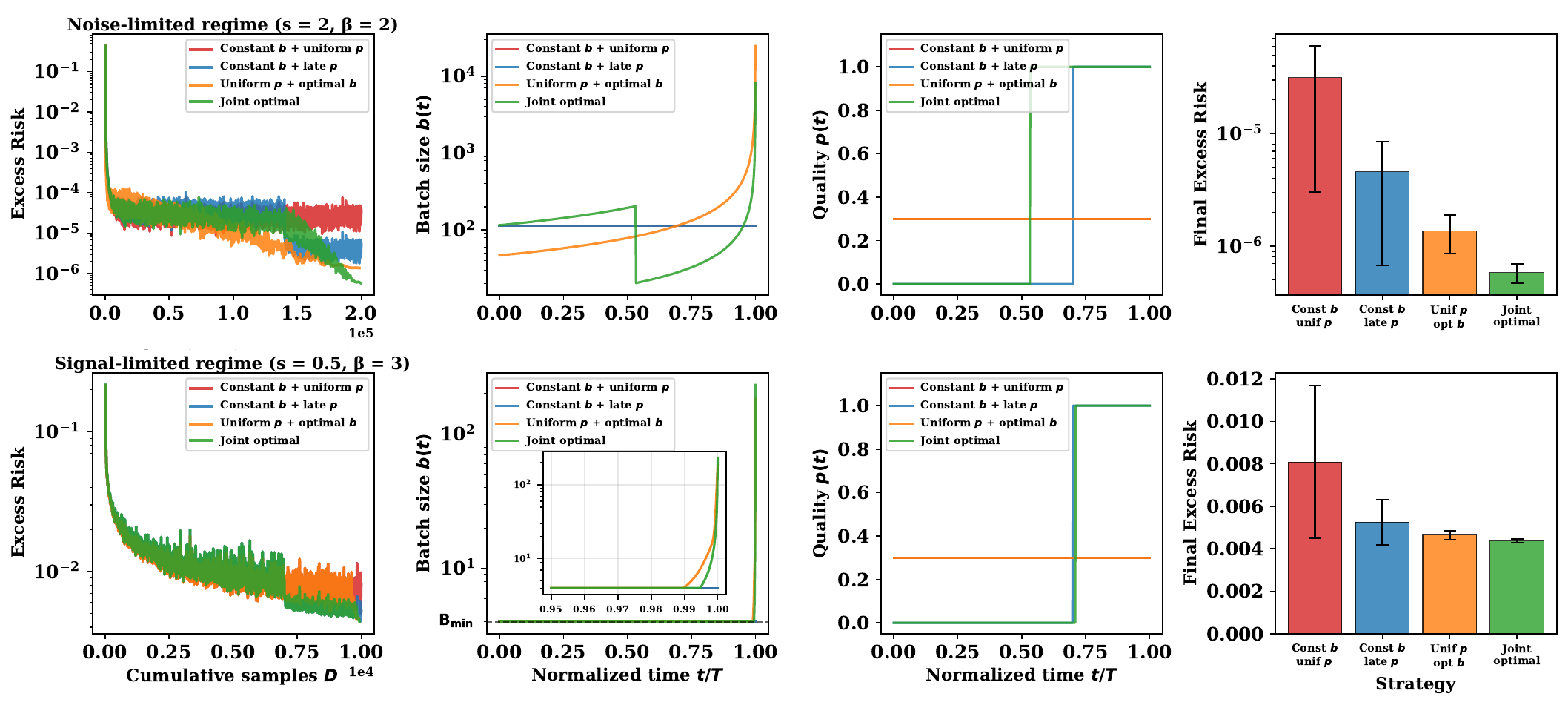}
    \caption{Verification of Theorem~\ref{thm:joint-optimal}.
    Top: noise-limited ($s=2$, $\beta=2$); bottom: signal-limited ($s=0.5$, $\beta=3$).
    Four strategies share the same total data budget $D$ and high-quality fraction $\rho$:
    (1)~constant $b$ + uniform $p$ (baseline),
    (2)~constant $b$ + late $p$,
    (3)~uniform $p$ + optimal $b$,
    (4)~joint optimal (Theorem~\ref{thm:joint-optimal}).
    Columns show loss dynamics, batch-size schedules, data-quality schedules, and final excess risk.
    The joint optimal strategy achieves the lowest final risk in both regimes.}
    \label{fig:joint-optimal}
\end{figure*}

\begin{figure}[!t]
    \centering
    \includegraphics[width=\columnwidth]{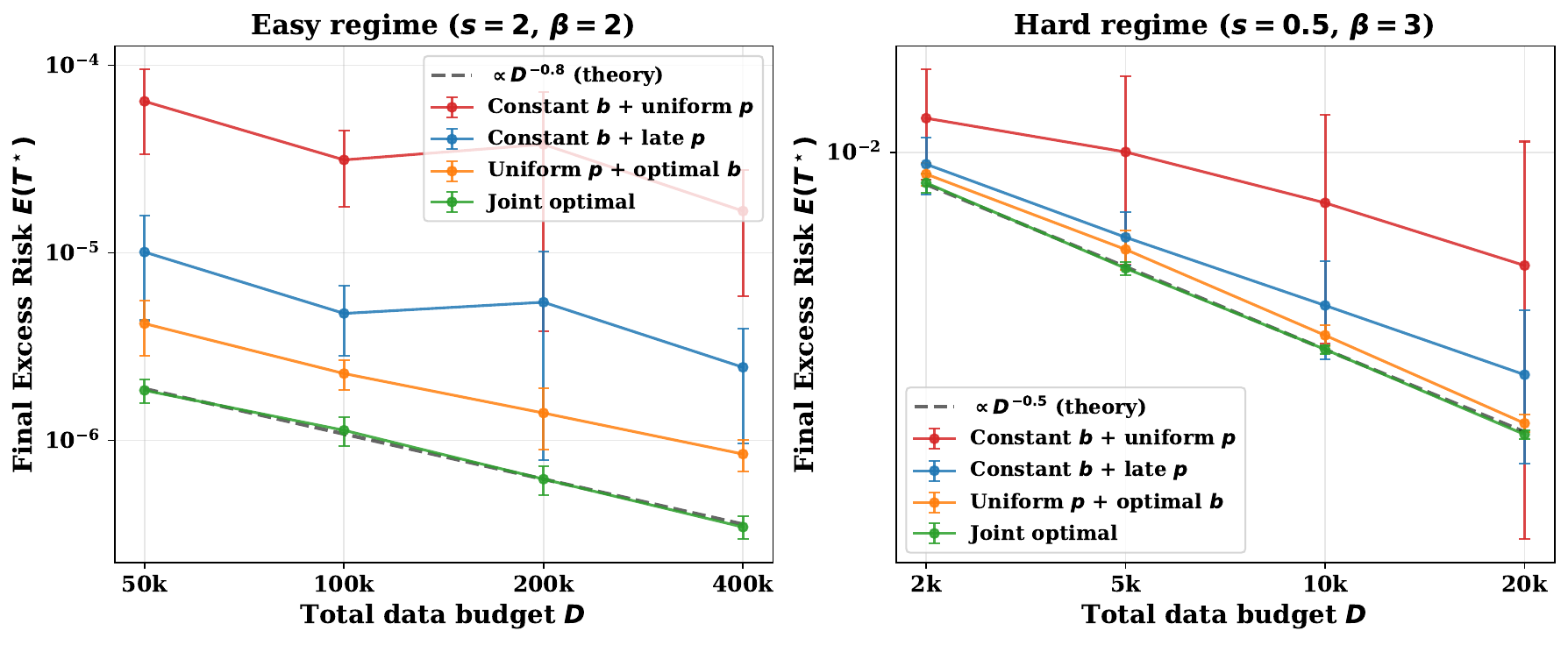}
    \caption{Scaling-law verification of Theorem~\ref{thm:joint-optimal}.
    Final excess risk vs.\ total data budget $D$ (log-log scale).
    Left: noise-limited regime ($s\!=\!2$, $\beta\!=\!2$, theory: $D^{-0.8}$);
    right: signal-limited regime ($s\!=\!0.5$, $\beta\!=\!3$, theory: $D^{-0.5}$).
    The joint optimal strategy matches the predicted power-law scaling in both regimes.}
    \label{fig:scaling-law}
\end{figure}

\paragraph{Scaling-law verification.}
Theorem~\ref{thm:joint-optimal} predicts that the optimal excess risk scales as
$\E[E(\bar{\mb{\theta}}_{T^\star})]\asympconst D^{-s\beta/(1+s\beta)}$ in the noise-limited regime and
$\E[E(\bar{\mb{\theta}}_{T^\star})]\asympconst B^s D^{-s}$ in the signal-limited regime.
To verify this, we sweep the total data budget $D$ and measure the final excess risk achieved by each strategy.
The signal-limited regime uses a smaller data range because its signal-constrained dynamics require longer per-sample training horizons ($T\propto D/B$), making large budgets computationally expensive.
Figure~\ref{fig:scaling-law} shows that the joint optimal strategy closely tracks the predicted power-law exponent in both regimes
($-0.81$ vs.\ theory $-0.80$ for noise-limited; $-0.51$ vs.\ theory $-0.50$ for signal-limited),
while suboptimal strategies exhibit shallower scaling.

\subsection{Interpreting the joint optimum}
\label{sec:interpretation}

The key difference from original FSL is that the quality-aware FSL
introduces data quality as an additional control variable. Instead of treating
the noise level as fixed, the schedule can decide when to allocate
high-quality, low-noise data through $p(t)$. This extra degree of freedom can
be used in two dual ways: it can be converted into more optimization steps at a
comparable noise scale, or into a lower noise scale at a comparable number of
optimization steps.

\paragraph{Noise-limited regime ($s>1-1/\beta$): quality as more signal.}
In this regime, the noise-accumulation integral is the bottleneck. The
batch-only optimum~\citep{wang2025fastcatchup} controls noise by ramping up the batch size, but it treats
all data as having the same noise level. Once high-quality data is available,
the joint optimum lowers the batch size during the low-noise phase. In
particular, when entering the high-quality region $S$, the optimal batch size
drops by a factor $r=\sigma_{\good}^2/\sigma_{\bad}^2$, so that
\[
    \sigma_{\good}^2/b^\star|_S
    =
    \sigma_{\bad}^2/b^\star|_{S^c}.
\]
Thus, the local noise-injection rate remains comparable across quality phases,
but the smaller batch size yields more optimization steps. In the
noise-limited regime, high-quality data is therefore used to obtain more signal
at roughly the same noise scale.

\paragraph{Signal-limited regime ($s<1-1/\beta$): quality as lower noise.}
In this regime, the signal term $t^{-s}$ is the bottleneck, so the optimal
schedule must preserve as many optimization steps as possible. Both the
batch-only and joint optima therefore spend most of training at the minimum
batch size $B_{\min}$. Since the update count is already maximized, the joint
optimum uses high-quality data in a different way: it places low-noise data
late in training. This curriculum placement lowers the noise scale of
late-stage updates without changing the leading-order signal-accumulation
mechanism. In the signal-limited regime, high-quality data is therefore used to
reduce noise at comparable signal accumulation.

\paragraph{Unifying principle.}
The two regimes show the same signal-noise trade-off from opposite sides. When noise is the bottleneck, high-quality data is converted into more signal without increasing noise. When signal is the bottleneck, high-quality data is converted into lower noise without sacrificing signal. As we will detail in Section~\ref{sec:exp-llm}, this duality motivates Drop-Stable-Rampup: drop the batch size after the quality transition to exploit low-noise data for more updates, keep it low to accumulate signal, and ramp up at the end to suppress terminal noise.

% ======================================================================

\section{From Theory to Practice: LLM Experiments}
\label{sec:exp-llm}

\subsection{The Drop-Stable-Rampup recipe}
Building on Section~\ref{sec:interpretation}, we translate the qualitative structure of the joint optimum into a practical schedule for the pretraining-to-midtraining quality transition. A single LLM training run must serve capabilities that may fall into different empirical regimes: general knowledge and fluency often behave as noise-limited, whereas mathematical reasoning and some code-generation tasks exhibit signal-limited trajectories. Since midtraining uses a much smaller curated corpus than pretraining, low-batch training is feasible on existing hardware. This allows us to go beyond conventional curriculum learning with learning rate decay---which exploits high-quality data only for noise reduction. We additionally accumulate signal through increased update intensity. This leads to \textbf{Drop-Stable-Rampup}:
\begin{enumerate}[nosep]
    \item \textbf{Drop}: cut the batch size at the quality transition---accumulating signal via more gradient steps under low-noise conditions.
    \item \textbf{Stable}: hold $B_{\min}$ for a period---maximizing signal acquisition for signal-limited capabilities.
    \item \textbf{Rampup}: linearly grow the batch size (equivalently decay the learning rate~\citep{smith2018dont})---suppressing accumulated noise for final convergence.
\end{enumerate}

\paragraph{Setup.}
Our large-scale proprietary-data evaluation uses a Mixture-of-Experts (MoE) model with 15B total parameters and 2.4B active parameters, adopting the shortcut-connected MoE architecture of~\citet{cai2025shortcutconnected}. All runs in this setting are initialized from a checkpoint pretrained on 1.2T tokens and use a fixed midtraining budget of 108B tokens. We additionally conduct a controlled public-data validation with two architectures---a 600M LLaMA-style dense model and a 15B-total/2.4B-active MoE model. Both public-data models are pretrained on the same 300B FineWeb-Edu corpus and midtrained on the same 30B-token public math/code mixture. Within each architecture, all schedules are token-budget matched and start from the same pretrained checkpoint. Detailed configurations are provided in Appendix~\ref{app:llm-details}. We structure the evaluation into four parts: (1)~testing the batch-size drop across data corpora, including a Constant~2k control, (2)~selecting the stable-phase ratio, (3)~comparing DSR against conventional schedules at large scale, and (4)~validating transfer across architectures while holding public training data fixed.

\subsection{Verifying the effectiveness of dropping batch size}
\label{sec:exp_drop_data_quality}
Theorem~\ref{thm:joint-optimal}(I) predicts that the batch size should drop when training transitions to lower-noise data. To test whether this structural prediction transfers to LLMs, we compare three batch-size schedules: (1)~Constant~4k, (2)~Ramp-up 4k$\to$8k, and (3)~Drop 1k$\to$8k, which drops the pretraining batch size from 4k to 1k at the transition and subsequently ramps it to 8k. We evaluate all three schedules on both the pretraining and midtraining corpora (Appendix~\ref{app:data-composition}).

\begin{table*}[ht]
\centering
\caption{
Unified 15B MoE ablation results. 
Constant~2k is evaluated only on the midtraining corpus.
\textbf{Bold} indicates the best result within each comparison block.
}
\label{tab:main_results}
\scriptsize
\setlength{\tabcolsep}{3.5pt}
\renewcommand{\arraystretch}{1.08}
\resizebox{\textwidth}{!}{%
\begin{tabular}{llccc cccc ccc}
\toprule
\multirow{2}{*}{\textbf{Category}} 
& \multirow{2}{*}{\textbf{Benchmark}}
& \multicolumn{3}{c}{\textbf{Pretraining Corpus}}
& \multicolumn{4}{c}{\textbf{Midtraining Corpus}}
& \multicolumn{3}{c}{\textbf{Midtraining Schedule}} \\
\cmidrule(lr){3-5} 
\cmidrule(lr){6-9}
\cmidrule(lr){10-12}
&
& 4k 
& 4k$\to$8k 
& 1k$\to$8k
& 4k 
& 4k$\to$8k 
& 1k$\to$8k
& 2k
& Cos 
& WSD 
& DSR \\
\midrule

\multirow{4}{*}{Knowledge}
& MMLU      & 54.29 & \textbf{56.05} & 54.92 & 64.90 & 65.35 & \textbf{66.40} & 64.70 & 65.88 & 66.58 & \textbf{67.27} \\
& MMLU-Pro  & 23.61 & 24.21 & \textbf{26.29} & 38.11 & 37.46 & \textbf{39.07} & 35.71 & 40.04 & 40.96 & \textbf{41.89} \\
& CMMLU     & 59.25 & \textbf{60.57} & 60.00 & 66.02 & 66.07 & \textbf{67.01} & 65.04 & 67.43 & 67.61 & \textbf{68.59} \\
& C-Eval    & 57.89 & 59.96 & \textbf{59.99} & 65.61 & 66.71 & \textbf{67.37} & 65.47 & 67.84 & 68.09 & \textbf{68.24} \\
\midrule

\multirow{2}{*}{Reasoning}
& ARC-c     & 67.50 & \textbf{69.50} & 67.00 & 74.50 & 76.00 & \textbf{77.50} & 77.00 & 78.50 & 80.50 & \textbf{81.00} \\
& BBH       & 36.70 & 38.03 & \textbf{38.34} & \textbf{51.68} & 50.30 & 49.60 & 50.09 & 48.13 & 52.19 & \textbf{53.92} \\
\midrule

\multirow{2}{*}{Math}
& GSM8K     & 27.56 & \textbf{29.72} & 26.64 & 61.74 & 61.89 & \textbf{66.74} & 61.28 & 62.97 & 65.59 & \textbf{69.82} \\
& MATH      &  9.20 &  9.38 & \textbf{9.74} & 30.48 & 32.22 & \textbf{36.08} & 33.54 & 33.98 & 36.86 & \textbf{39.66} \\
\midrule

\multirow{6}{*}{Code}
& HE+       & 20.12 & \textbf{21.95} & \textbf{21.95} & 41.46 & \textbf{45.12} & 43.90 & 42.07 & 43.29 & 46.95 & \textbf{49.39} \\
& BCB       & 21.23 & \textbf{23.95} & 22.46 & 35.09 & 37.02 & \textbf{39.04} & 34.74 & \textbf{40.00} & 35.79 & 38.60 \\
& LCB       &  3.30 &  3.30 & \textbf{3.74} & 11.01 & 12.33 & \textbf{12.78} & 10.79 &  9.47 & 12.33 & \textbf{13.88} \\
& CRUX-I    & \textbf{30.38} & 28.12 & 28.75 & 36.12 & 37.38 & \textbf{38.50} & 36.38 & 36.88 & 38.25 & \textbf{39.75} \\
& CRUX-O    & 28.12 & \textbf{28.62} & 27.25 & 35.62 & 37.50 & \textbf{39.75} & 37.38 & 40.38 & 40.00 & \textbf{40.88} \\
& MultiPL-E & 17.26 & \textbf{19.74} & 18.34 & 41.11 & 42.54 & \textbf{43.30} & 43.03 & 43.95 & 44.87 & \textbf{47.53} \\
\midrule

\multicolumn{2}{l}{\textbf{Overall}}
& 32.60 & \textbf{33.79} & 33.24
& 46.68 & 47.71 & \textbf{49.07} & 46.94
& 48.48 & 49.76 & \textbf{51.46} \\

\bottomrule
\end{tabular}%
}
\end{table*}

\begin{figure}[ht]
    \centering
    \includegraphics[width=\linewidth]{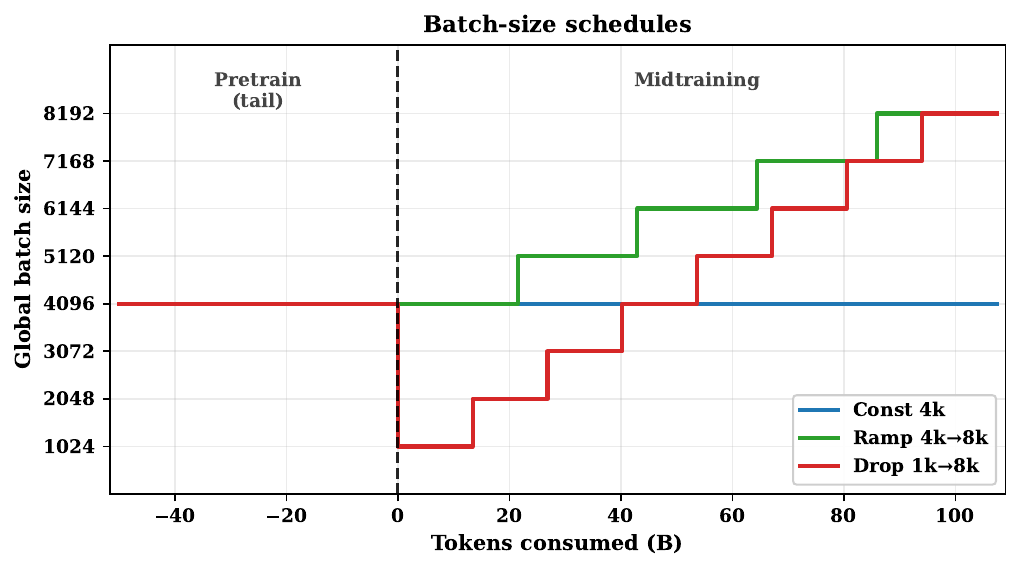}
    \includegraphics[width=\linewidth]{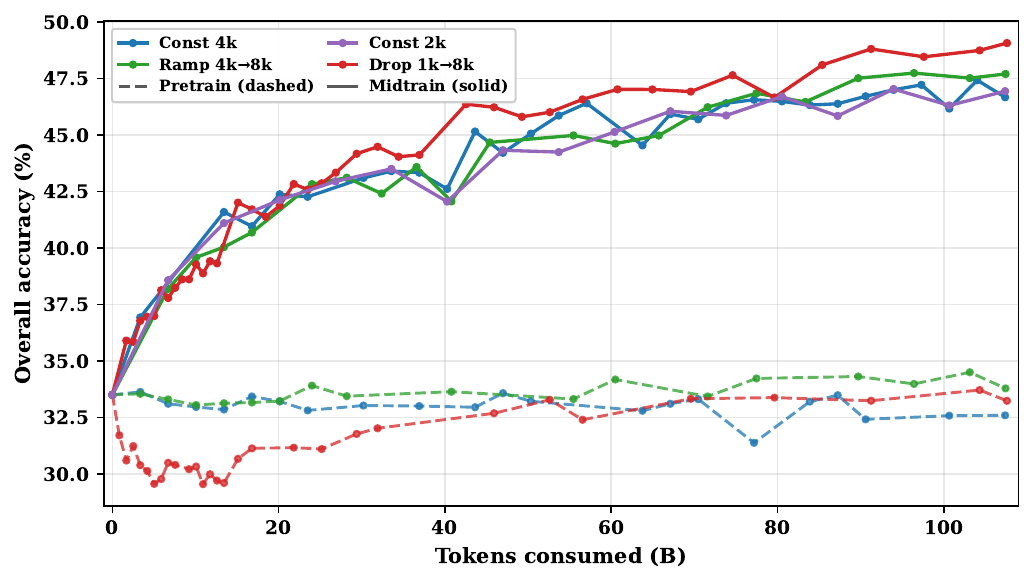}
    \caption{Batch-size schedules and overall-accuracy trajectories on pretraining (dashed) and midtraining (solid) corpora.}
    \label{fig:pre_mid_train}
\end{figure}

As shown in Table~\ref{tab:main_results} and Figure~\ref{fig:pre_mid_train}, the results exhibit a sharp dichotomy that aligns with our theoretical predictions. On the pretraining corpus, the monotonic Ramp-up achieves the best overall performance (33.79), with Drop trailing slightly at 33.24. This is consistent with our theory: on pretraining data, the noise level $\sigma_{\eff}^2$ remains uniformly high, so the signal-accumulation benefit of smaller batch sizes cannot compensate for the increased noise injection. Without a quality transition, the Ramp-up remains preferable.

However, on the high-quality midtraining corpus---where the transition from pretraining to curated data creates precisely the noise drop our theory addresses---the trend reverses. Drop~1k$\to$8k achieves the best overall score (49.07), outperforming the Ramp-up baseline by +1.36. The gains are broad-based, with notable improvements on GSM8K~(+4.85), MATH~(+3.86), and CRUX-O~(+2.25), consistent with the prediction that dropping the batch size at a quality transition yields more gradient steps for signal accumulation under low-noise conditions.

Finally, to rule out the possibility that the improvement arises merely from taking more optimizer updates, we add a Constant~2k control in the midtraining setting. Constant~2k uses more optimizer updates than every other schedule, yet it does not outperform either Ramp~4k$\to$8k or Drop~1k$\to$8k and is not best on any benchmark. Thus, the drop--ramp gain cannot be explained by a smaller constant batch or by the total number of optimizer updates alone.

\subsection{Selecting the stable-phase ratio}
\label{sec:exp_stable_ratio}

\begin{figure}[ht]
    \centering
    \begin{flushleft}
        \includegraphics[width=0.88\linewidth]{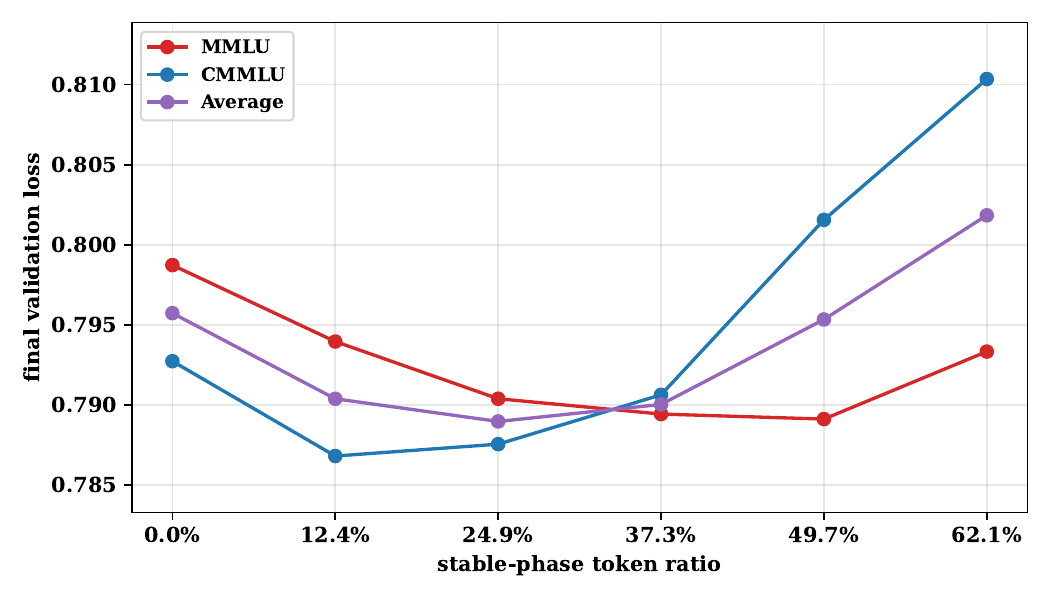}
    \end{flushleft}
    \includegraphics[width=\linewidth]{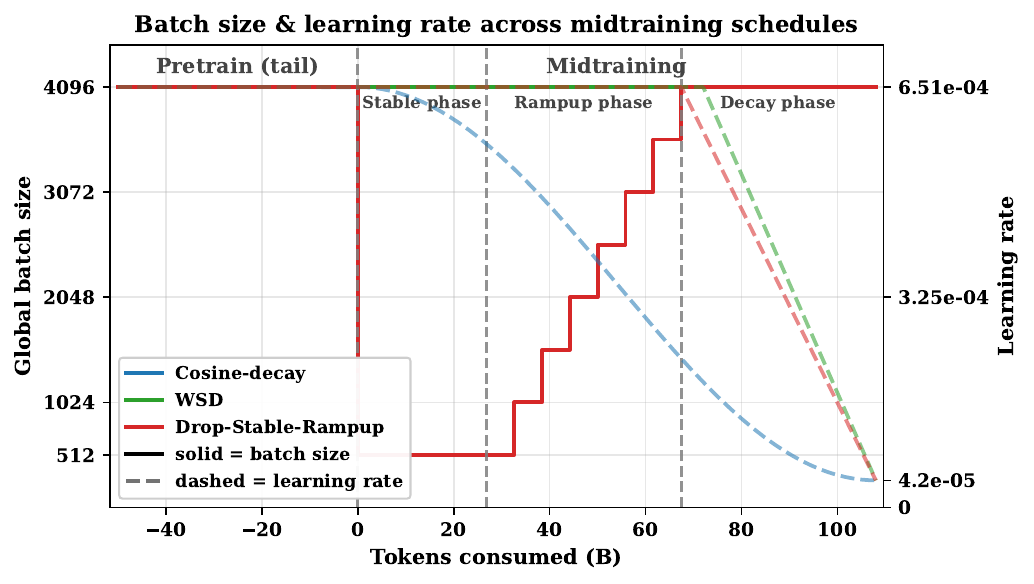}
    \caption{The final validation loss vs.\ stable-phase ratio (top) and schedule trajectories (bottom).}
    \label{fig:stable_ratio}
\end{figure}

Theorem~\ref{thm:joint-optimal}(II) predicts the existence of a flat low-batch-size phase before the terminal ramp-up, with an asymptotic duration that depends on problem-dependent constants ($s$, $\beta$, $\sigma_{\good}^2$) not directly observable in LLM training. We therefore tune the stable-phase ratio empirically, scaling the batch-size bounds to [512, 4096] and varying the token ratio allocated to the stable phase (0\%--62\%) under the fixed 108B budget. The schedule has three phases: stable (batch size fixed at 512), ramp-up (linear increase 512$\to$4096), and decay (learning rate linear decay equivalent to batch size linear growth~\citep{smith2018dont}). We fix equal token allocation for the ramp-up and decay phases and sweep only the stable ratio.

We select the stable ratio using validation loss on MMLU and CMMLU. As shown in Figure~\ref{fig:stable_ratio}, validation loss exhibits a U-shaped curve and reaches its minimum at a stable ratio of 24.9\% in this setting. A too-short stable phase skips the signal-accumulation phase; a too-long stable phase leaves insufficient budget for the terminal ramp-up needed for noise suppression. The two validation sets also exhibit different preferences: MMLU benefits from a longer stable phase, suggesting stronger signal-accumulation sensitivity, whereas CMMLU degrades under overly long stable phases, suggesting greater reliance on noise suppression. Notably, every stable-ratio configuration in this sweep exceeds both WSD and Cosine-decay baselines on overall accuracy, demonstrating the robustness of the Drop-Stable-Rampup strategy (see Figure~\ref{fig:app-rampup-ckpt} for details).

% Declared before the public-data section so the double-column table can float
% upward while remaining within the main-text page limit.
\begin{table*}[!t]
\centering
\scriptsize
\setlength{\tabcolsep}{3.5pt}
\renewcommand{\arraystretch}{0.96}
\setlength{\abovecaptionskip}{3pt}
\setlength{\belowcaptionskip}{0pt}
\resizebox{\textwidth}{!}{%
\begin{tabular}{lcccccccc}
\toprule
\multirow{2}{*}{\textbf{Benchmark}}
& \multicolumn{4}{c}{\textbf{600M dense model}}
& \multicolumn{4}{c}{\textbf{15B-total/2.4B-active MoE model}} \\
\cmidrule(lr){2-5}\cmidrule(lr){6-9}
& \textbf{Cosine} & \textbf{WSD-Early} & \textbf{WSD-Late} & \textbf{DSR}
& \textbf{Cosine} & \textbf{WSD-Early} & \textbf{WSD-Late} & \textbf{DSR} \\
\midrule
MMLU-STEM           & 24.74 & 24.70 & 25.10 & \textbf{27.35} & 33.52 & 33.19 & 34.71 & \textbf{39.57} \\
MMLU-Pro-STEM       &  8.57 &  9.14 &  9.71 & \textbf{11.50} & 13.86 & 13.57 & 16.00 & \textbf{17.14} \\
ARC-c               & 23.00 & 21.50 & 24.00 & \textbf{25.00} & 32.00 & 34.00 & 36.00 & \textbf{49.50} \\
BBH                 & 30.19 & 30.31 & 30.22 & \textbf{31.11} & 31.16 & 31.28 & 31.06 & \textbf{32.92} \\
GSM8K               &  6.70 &  5.77 &  7.78 & \textbf{16.32} & 27.87 & 27.41 & 32.41 & \textbf{41.19} \\
MATH                &  4.26 &  3.62 &  4.70 & \textbf{5.64}  &  7.46 &  7.88 & 10.68 & \textbf{17.40} \\
HE+                 & 17.07 & 18.29 & 20.12 & \textbf{23.78} & 29.27 & 29.88 & 29.27 & \textbf{37.20} \\
MBPP+               & 11.11 & 13.23 & 12.17 & \textbf{20.11} & 24.87 & 29.37 & 33.86 & \textbf{37.57} \\
BCB                 & 10.09 &  9.04 & 10.18 & \textbf{14.91} & 19.56 & 19.74 & 21.32 & \textbf{26.93} \\
LCB                 &  --   &  --   &  --   & --             &  0.72 &  0.00 &  0.72 & \textbf{2.15} \\
MultiPL-E           &  6.21 &  6.06 &  8.23 & \textbf{9.16}  &  9.94 &  9.63 & 11.80 & \textbf{13.98} \\
\midrule
\textbf{Overall} & 14.19 & 14.17 & 15.22 & \textbf{18.49} & 20.93 & 21.45 & 23.44 & \textbf{28.69} \\
\bottomrule
\end{tabular}%
}
\caption{Public-data validation across architectures. Dashes denote unavailable results excluded from the dense-model mean; \textbf{bold} indicates the best result.}
\label{tab:public_cross_arch}
\end{table*}

\subsection{Main results: schedule comparison}

% \begin{figure}[ht]
%     \centering
%     \includegraphics[width=\linewidth]{three_schedules_bs_lr.pdf}
%     % \includegraphics[width=\linewidth]{overall_curve_three_schedules.pdf}
%     \caption{Schedule trajectories for Cosine-decay, WSD, and our Drop-Stable-Rampup.}
%     \label{fig:three_schedules}
% \end{figure}

We evaluate Drop-Stable-Rampup with the selected 24.9\% stable ratio in the 108B-token proprietary-data setting. We compare it against two widely adopted baselines: Cosine-decay~\citep{loshchilov2017sgdr} and WSD~\citep{hu2024minicpm}. Both baselines use a constant batch size of 4096 and the same token budget (see Figure~\ref{fig:stable_ratio} for schedule trajectories). Results are summarized in Table~\ref{tab:main_results}, with further trajectory details provided in Figure~\ref{fig:app-all-curves}.

Drop-Stable-Rampup achieves the highest overall score of 51.46, outperforming WSD (49.76) and Cosine-decay (48.48). The largest gains appear on mathematical reasoning and algorithmic code generation: GSM8K~(+4.23 over WSD), MATH~(+2.80), HumanEval+~(+2.44), and MultiPL-E~(+2.66). These benchmarks also exhibit the smallest early-phase gaps---DSR stays close to WSD during the stable phase and overtakes it during ramp-up---suggesting that the extended low-batch-size phase accumulates latent capacity that manifests after noise suppression. We provide a detailed per-benchmark analysis in Appendix~\ref{sec:regime-analysis}.

\subsection{Validation on public data across architectures}
\label{sec:public-validation}

We further test whether the three-phase DSR structure transfers across architectures without phase-length tuning. We use two models trained on exactly the same public corpora: (i)~a standard 600M-parameter dense LLaMA-style model~\citep{touvron2023llama2}, and (ii)~the same 15B-total/2.4B-active MoE architecture used in the preceding experiments. Each model is pretrained on 300B FineWeb-Edu tokens~\citep{penedo2024fineweb}. Starting from its corresponding pretrained checkpoint, every schedule then consumes the same 30B-token midtraining mixture: 15B tokens from SwallowCode~\citep{fujii2026rewriting} and 15B tokens from the \texttt{4plus} subset of Nemotron-CC-Math~\citep{mahabadi2026nemotronccmath}. Within each architecture, the four schedules share the checkpoint, data and data order, tokenizer, optimizer, random seed, and evaluation protocol; only the batch-size and learning-rate schedules differ. For both architectures, DSR uses a simple equal-third allocation: 10B tokens for a minimum-batch stable phase, 10B for a batch-size ramp, and 10B for a maximum-batch phase with terminal learning-rate decay. Full details are given in Appendix~\ref{app:public-validation-details}.

As shown in Table~\ref{tab:public_cross_arch}, DSR is best on every reported STEM, mathematics, and code benchmark for both architectures, as well as on the general-reasoning evaluations. On the 600M dense model, it improves the mean from 15.22 for WSD-Late, the strongest baseline, to 18.49 (+3.27). On the 2.4B-active MoE model, it improves the mean from 23.44 to 28.69 (+5.25). These results show that the three-phase structure transfers across dense and MoE architectures even with a fixed equal-third allocation. We do not interpret this allocation as universal.

\section{Conclusion}
We study how LLMs should consume scarce high-quality data. 
By solving the joint data-quality and batch-size scheduling problem in asymptotic closed form, we show that optimal training schedules must jointly coordinate data quality and batch size. The resulting joint optimum reveals two regimes, but a shared principle: low-batch training after a quality transition accumulates signal, while late high-quality placement and terminal batch-size ramp-up suppress noise. This principle leads to Drop-Stable-Rampup, which improves LLM midtraining over standard schedules. More broadly, our results suggest that high-quality data should not merely be added to training; it should change how compute is allocated.
% \newpage
\section*{Limitations}

\paragraph{Theoretical simplification.}
Our analysis is built on feature-space linear regression with Gaussian features and power-law spectral decay. While this modeling framework is standard in the scaling-law literature and has been useful for deriving qualitative predictions about training dynamics, it does not capture key aspects of LLM training: nonlinear feature learning, multi-layer compositional representations, and attention-based context dependence. Furthermore, the two-level data-quality model ($\sigma_{\good}^2$ vs.\ $\sigma_{\bad}^2$) is a stylized abstraction; real training corpora exhibit a continuous quality spectrum, and what constitutes ``quality'' is itself task-dependent. Extending the theory to richer noise models and nonlinear dynamics remains an important direction.

\paragraph{Multi-task heterogeneity.}
Our framework models data quality as a single scalar dimension (low-noise vs.\ high-noise), but LLM training is inherently multi-task: different data domains (math, code, knowledge) correspond to distinct subtasks with potentially different optimal schedules. As our empirical trajectory analysis shows (Appendix~\ref{sec:regime-analysis}), benchmarks differ
substantially in their early-phase gaps and delayed gains, and a single schedule must compromise across these heterogeneous dynamics. A more fine-grained theory that jointly optimizes per-domain data mixtures $\{p_i(t)\}$ together with the batch-size $b(t)$ (or learning rate $\eta(t)$) schedule would better capture this heterogeneity, but remains an open problem.

\paragraph{Scale dependence of optimal hyperparameters.}
Our experiments cover a 600M dense model and a 15B-total/2.4B-active MoE model, but this remains a limited sample of model scales and architectures. As model size grows, the balance between signal accumulation and noise suppression may shift, potentially requiring different drop magnitudes, stable-phase durations, or ramp-up rates. Whether the specific hyperparameter choices (e.g., the batch-size drop factor and stable ratio) transfer beyond the evaluated settings, or how they should be adapted, remains an open question that our current framework does not address.

\paragraph{Experimental scope and overhead.}
We evaluate Drop-Stable-Rampup in a large-scale proprietary-data MoE setting and in a controlled public-data study that holds FineWeb-Edu pretraining and the SwallowCode/Nemotron-CC-Math midtraining mixture fixed across a 600M dense model and a 2.4B-active MoE model. This broadens the evidence across architecture, scale, and data access, but does not establish universal transfer to larger dense models, substantially different data budgets, languages, or domain mixtures. Additionally, the low-batch-size stable phase reduces hardware utilization, incurring a $1.1\text{--}1.2\times$ wall-clock overhead across the evaluated settings. These overheads are specific to their respective infrastructure configurations (Appendix~\ref{app:llm-details}); different parallelism strategies or hardware may yield different ratios. Our comparisons are token-budget matched rather than wall-clock matched, so we do not claim GPU-hour or wall-clock optimality.

\section*{Ethical Considerations}

Our work studies training schedules for LLM midtraining and does not release a new model, dataset, or user-facing system. The main risks are therefore indirect. First, the proposed schedule is evaluated through large-scale training experiments, which incur nontrivial computational cost. As discussed in the Limitations section, the low-batch-size stable phase reduces hardware utilization in our setup and leads to a $1.1\text{--}1.2\times$ wall-clock overhead; the practical and environmental cost of such schedules should be considered when scaling them further.

Second, our experiments use large-scale pretraining and midtraining corpora. Such corpora may inherit residual privacy, copyright, or offensive-content risks common to web-scale training data. For the proprietary corpora used in our experiments, data collection and processing follow the existing data-governance pipeline of the data provider, including automated filtering for sensitive or unsafe content. The corpora are used only for aggregate training experiments and benchmark evaluation; this work does not introduce, redistribute, or publish raw training data, nor does it release trained checkpoints. Since large-scale corpora cannot be manually audited in full, residual data-governance risks may remain.

Third, improving the efficiency of midtraining could lower the cost of capability improvement for both beneficial and potentially harmful downstream applications. We mitigate these risks by restricting the contribution to schedule-level analysis, reporting the relevant training setup and compute-efficiency trade-offs, and not releasing trained models or training corpora.

\bibliography{references}

\clearpage
\appendix
\section{Gradient-Noise Covariance}
\label{app:gradient-noise}

Let
\[
    \mb{e}=\mb{\theta}-\mb{\theta}^\star.
\]
For one sample, the stochastic gradient is
\[
\begin{aligned}
    \mb{g}(\mb{\theta};\mb{x},y)
    &=
    \bigl(\langle \mb{\phi}(\mb{x}),\mb{\theta}\rangle-y\bigr)\mb{\phi}(\mb{x}) \\
    &=
    \bigl(\langle \mb{\phi}(\mb{x}),\mb{e}\rangle-\epsilon\bigr)\mb{\phi}(\mb{x}).
\end{aligned}
\]
Taking expectation over \(\mb{x}\) and \(\epsilon\), and using \(\E[\mb{\phi}(\mb{x})]=\mb{0}\),
we obtain \(\E[\mb{g}(\mb{\theta};\mb{x},y)]=\mb{H}\mb{e}\),
confirming that the stochastic gradient is unbiased.

The covariance of the single-sample gradient noise at time \(t\) is
\[
    \mb{\Sigma}(\mb{\theta})
    =
    \operatorname{Cov}
    \bigl(
        (\langle \mb{\phi}(\mb{x}),\mb{e}\rangle-\epsilon)\mb{\phi}(\mb{x})
    \bigr).
\]
Since the label noise is independent of \(\mb{x}\),
\[
    \mb{\Sigma}(\mb{\theta})
    =
    \operatorname{Cov}
    \bigl(\langle \mb{\phi}(\mb{x}),\mb{e}\rangle\mb{\phi}(\mb{x})\bigr)
    +
    \sigma_{\eff}^2(t)\mb{H}.
\]

\begin{lemma}[Quality-aware anisotropic noise]
\label{lem:quality-anisotropic-noise}
For any \(\mb{\theta}\in\R^N\) and any time \(t\), the single-sample gradient-noise
covariance satisfies
\[
    \mb{v}_j^\top \mb{\Sigma}(\mb{\theta})\mb{v}_j
    \asympconst
    \lambda_j\bigl(E(\mb{\theta})+\sigma_{\eff}^2(t)\bigr),
\]
where \(\mb{v}_j\) is the eigenvector of \(\mb{H}\) with eigenvalue \(\lambda_j\), and the constants are independent of \(j\), \(t\), and \(\mb{\theta}\).
\end{lemma}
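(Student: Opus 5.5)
Since $\mb{H}$ is diagonal, take $\mb{v}_j=\mb{e}_j$. The decomposition just before the statement gives
\[
\mb{\Sigma}(\mb{\theta})=\operatorname{Cov}\bigl(\langle \mb{\phi},\mb{e}\rangle\mb{\phi}\bigr)+\sigma_{\eff}^2(t)\mb{H},
\]
where the cross terms vanish because $\epsilon$ is zero-mean and independent of $\mb{x}$, including under the quality mixture. The second term contributes exactly $\sigma_{\eff}^2(t)\lambda_j$ in direction $j$. So it remains to show that the $(j,j)$ entry of the first term is $\asymp \lambda_j E(\mb{\theta})$, and then add the two contributions.

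For the first term I will compute the fourth moment using Gaussianity. Write $\mb{\phi}=\mb{H}^{1/2}\mb{z}$ with $\mb{z}\sim\mathcal N(\mb{0},\mb{I})$. By Isserlis' theorem,
\[
\E\bigl[\langle\mb{\phi},\mb{e}\rangle^2\mb{\phi}\mb{\phi}^\top\bigr]=\bigl(\mb{e}^\top\mb{H}\mb{e}\bigr)\mb{H}+2\mb{H}\mb{e}\mb{e}^\top\mb{H}.
\]
The mean of $\langle\mb{\phi},\mb{e}\rangle\mb{\phi}$ is $\mb{H}\mb{e}$, so subtracting its outer product gives
\[
\operatorname{Cov}=2E(\mb{\theta})\,\mb{H}+\mb{H}\mb{e}\mb{e}^\top\mb{H},
\]
using $\mb{e}^\top\mb{H}\mb{e}=2E(\mb{\theta})$. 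The $(j,j)$ entry is therefore $2E(\mb{\theta})\lambda_j+\lambda_j^2e_j^2$.

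The lower bound is immediate: the entry is at least $2\lambda_jE(\mb{\theta})$, so with the $\sigma_{\eff}^2$ term the constant $1$ works.

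For the upper bound, note that $\lambda_j e_j^2\le\sum_i\lambda_i e_i^2=2E(\mb{\theta})$. Hence $\lambda_j^2e_j^2\le 2\lambda_jE(\mb{\theta})$, so the entry is at most $4\lambda_jE(\mb{\theta})$. Combining with the label-noise part gives
\[
\lambda_j\bigl(E+\sigma_{\eff}^2\bigr)\le \mb{e}_j^\top\mb{\Sigma}\mb{e}_j\le 4\lambda_j\bigl(E+\sigma_{\eff}^2\bigr).
\]
These constants are absolute, as the lemma requires.

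The only delicate step is the rank-one term $\mb{H}\mb{e}\mb{e}^\top\mb{H}$. As a matrix it is not dominated by a multiple of $E\,\mb{H}$ in general, but only diagonal entries are needed here, and the bound $\lambda_je_j^2\le 2E$ controls them. I would also check two points. First, conditioning on the quality label does not create correlation between $\epsilon$ and $\mb{\phi}$, since the quality indicator is drawn independently of $\mb{x}$. Second, the $\sigma_{\eff}^2(t)$ term arises as the mixture second moment $\E[\epsilon^2]$.
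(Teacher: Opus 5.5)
Your proof is correct and follows essentially the same route as the paper: Isserlis/Wick gives $\mb{\Sigma}(\mb{\theta})=\mb{H}\mb{e}\mb{e}^\top\mb{H}+2E(\mb{\theta})\mb{H}+\sigma_{\eff}^2(t)\mb{H}$, the lower bound drops the PSD rank-one term, and the upper bound yields the same constants $1$ and $4$. One side remark of yours is wrong, though it does not affect the proof: $\mb{H}\mb{e}\mb{e}^\top\mb{H}$ \emph{is} dominated as a matrix. Cauchy--Schwarz in the $\mb{H}$-semi-inner product gives $(\mb{v}^\top\mb{H}\mb{e})^2\le(\mb{v}^\top\mb{H}\mb{v})(\mb{e}^\top\mb{H}\mb{e})$, i.e.\ $\mb{H}\mb{e}\mb{e}^\top\mb{H}\preceq 2E(\mb{\theta})\mb{H}$; this is exactly the bound the paper uses, and your diagonal estimate $\lambda_j e_j^2\le 2E(\mb{\theta})$ is its special case $\mb{v}=\mb{v}_j$.
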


\begin{proof}[Proof]
Since \(\mb{\phi}(\mb{x})\sim\mathcal N(\mb{0},\mb{H})\), Wick's
formula gives
\[
\begin{aligned}
    &\E\bigl[\mb{\phi}(\mb{x})\mb{\phi}(\mb{x})^\top \mb{e} \mb{e}^\top \mb{\phi}(\mb{x})\mb{\phi}(\mb{x})^\top\bigr]
    \\
    &\qquad =
    2\mb{H}\mb{e}\mb{e}^\top \mb{H} + (\mb{e}^\top \mb{H} \mb{e})\mb{H}.
\end{aligned}
\]
Therefore,
\[
\begin{aligned}
    \operatorname{Cov}\bigl(\langle \mb{\phi}(\mb{x}),\mb{e}\rangle\mb{\phi}(\mb{x})\bigr)
    =&
    \E\bigl[\mb{\phi}\mb{\phi}^\top \mb{e} \mb{e}^\top \mb{\phi}\mb{\phi}^\top\bigr] \\
    &-\mb{H}\mb{e}\mb{e}^\top \mb{H}
    \\
    =&
    \mb{H}\mb{e}\mb{e}^\top \mb{H}+(\mb{e}^\top \mb{H} \mb{e})\mb{H}.
\end{aligned}
\]
Since \(\mb{e}^\top \mb{H} \mb{e}=2E(\mb{\theta})\), we have
\[
    \mb{\Sigma}(\mb{\theta})
    =
    \mb{H}\mb{e}\mb{e}^\top \mb{H} + 2E(\mb{\theta})\mb{H} + \sigma_{\eff}^2(t)\mb{H}.
\]
The lower bound follows from \(\mb{H}\mb{e}\mb{e}^\top \mb{H}\succeq 0\).  For the upper bound, we have
\[
    \mb{H}\mb{e}\mb{e}^\top \mb{H}
    \preceq
    (\mb{e}^\top \mb{H} \mb{e})\mb{H}
    =
    2E(\mb{\theta})\mb{H}.
\]
Hence
\[
    \mb{\Sigma}(\mb{\theta})
    \preceq
    \bigl(4E(\mb{\theta})+\sigma_{\eff}^2(t)\bigr)\mb{H}.
\]
This proves the lemma.
\end{proof}

\section{Proof of Theorem~\ref{thm:quality-fsl}}
\label{pfofthm:quality-fsl}

\begin{proof}
Let \(\mb{e}_t=\bar{\mb{\theta}}_t-\mb{\theta}^\star\). Without loss of generality, assume zero initialization \(\bar{\mb{\theta}}_0=\mb{0}\),
so that \(\mb{e}_0=-\mb{\theta}^\star\) and \(e_{0,j}^2=|\theta_j^\star|^2\). Since \(\nabla R(\bar{\mb{\theta}}_t)=\mb{H}\mb{e}_t\), the SDE~\eqref{eq:sde}
decouples coordinate-wise: each \(e_{t,j}\) satisfies a scalar
Ornstein-Uhlenbeck equation with decay rate \(\lambda_j\). Integrating via Duhamel's formula yields
\[
\begin{aligned}
    e_{t,j}
    &=
    e^{-\lambda_j t}e_{0,j} \\
    &+
    \int_0^t
    e^{-\lambda_j(t-\tau)}
    \sqrt{\frac{\eta}{b(\tau)}}\,
    \bigl[\mb{\Sigma}(\bar{\mb{\theta}}_\tau)^{1/2}\dd \mb{W}_\tau\bigr]_j .
\end{aligned}
\]
By It\^o isometry,
\[
\begin{aligned}
    \E[e_{t,j}^2]
    &=
    e^{-2\lambda_j t}e_{0,j}^2\\
    &+
    \int_0^t
    e^{-2\lambda_j(t-\tau)}
    \frac{\eta}{b(\tau)}
    \E\bigl[q_j(\tau)\bigr]
    \dd \tau,
    \end{aligned}
\]
where
\[
    q_j(\tau):=\mb{v}_j^\top \mb{\Sigma}(\bar{\mb{\theta}}_\tau)\mb{v}_j .
\]
Therefore
\[
\begin{aligned}
    2\E[E(\bar{\mb{\theta}}_t)]
    &=
    \sum_{j\geq 1}
    \lambda_j e^{-2\lambda_j t}e_{0,j}^2
    \\
    +&
    \sum_{j\geq 1}
    \lambda_j
    \int_0^t
    e^{-2\lambda_j(t-\tau)}
    \frac{\eta}{b(\tau)}
    \E[q_j(\tau)]
    \dd \tau .
\end{aligned}
\]
By Lemma~\ref{lem:quality-anisotropic-noise},
\[
    \E[q_j(\tau)]
    \asympconst
    \lambda_j
    \bigl(\E[E(\bar{\mb{\theta}}_\tau)]+\sigma_{\eff}^2(\tau)\bigr).
\]
Hence, if
\[
    f(t):=\E[E(\bar{\mb{\theta}}_t)],
\]
then \(f\) satisfies the Volterra-type relation
\[
    f(t)
    \asympconst
    e(t)
    +
    \int_0^t
    \frac{\eta\,K(t-\tau)}{b(\tau)}
    \bigl(f(\tau)+\sigma_{\eff}^2(\tau)\bigr)\dd \tau,
\]
where
\[
    e(t):=
    \sum_{j\geq 1}\lambda_j |\theta_j^\star|^2 e^{-2\lambda_j t},
    K(t):=
    \sum_{j\geq 1}\lambda_j^2 e^{-2\lambda_j t}.
\]
Under the power-law assumptions \(\lambda_j\asymp j^{-\beta}\) and
\(|\theta_j^\star|^2\asymp j^{-1}\lambda_j^{s-1}\asymp j^{-(1+(s-1)\beta)}\),
the sums defining \(e(t)\) and \(K(t)\) can be approximated by integrals via
Euler-Maclaurin comparison. Substituting the continuous variable
\(x=\lambda_j t\) and using \(j\asymp \lambda_j^{-1/\beta}\), one obtains
\[
\begin{aligned}
    e(t)
    &=
    \sum_{j\geq 1}\lambda_j |\theta_j^\star|^2 e^{-2\lambda_j t} \\
    & \asympconst
    \int_0^\infty x^{s-1} e^{-2x}\dd x\cdot t^{-s}
    \asympconst t^{-s},
\end{aligned}
\]
and similarly
\[
\begin{aligned}
    K(t)
    &=
    \sum_{j\geq 1}\lambda_j^2 e^{-2\lambda_j t}\\
    &\asympconst
    \int_0^\infty x^{1-1/\beta} e^{-2x}\dd x\cdot t^{-(2-1/\beta)}\\
    &\asympconst (t+1)^{-(2-1/\beta)}.
\end{aligned}
\]

Now define
\[
    g(t)
    :=
    e(t)
    +
    \eta\int_0^t
    K(t-\tau)\frac{\sigma_{\eff}^2(\tau)}{b(\tau)}\dd \tau,
\]
and define the linear Volterra operator
\[
    (\mathcal{T}f)(t)
    :=
    \eta\int_0^t
    K(t-\tau)\frac{f(\tau)}{b(\tau)}\dd \tau.
\]
With this notation, the preceding relation admits the compact representation
\[
    f=g+\mathcal{T}f
\]
up to universal multiplicative constants.  Formally,
\[
    f=(I-\mathcal{T})^{-1}g=\sum_{m=0}^{\infty}\mathcal{T}^m g.
\]
It remains to check that the higher-order terms generated by the multiplicative
part of the gradient noise do not change the leading scaling.  Since
\(K(t)\asympconst (t+1)^{-(2-1/\beta)}\) is nonincreasing and integrable for \(\beta>1\),
\[
    (K*K)(t)
    =
    \int_0^t K(t-\tau)K(\tau)\dd \tau
    \lesssim K(t).
\]
Using \(b(t)\geq B_{\min}\geq 1\), we obtain
\[
    \mathcal{T}^2g(t)
    \lesssim
    \eta\,\mathcal{T}g(t).
\]
Iterating this estimate and taking \(\eta\) small enough yields
\[
    f(t)
    \asympconst
    g(t)+\mathcal{T}g(t).
\]
Moreover, because \(\sigma_{\good}^2\leq \sigma_{\eff}^2(t)\leq
\sigma_{\bad}^2\), the term \(\mathcal{T}g(t)
\) is bounded by the same two leading quantities contained in \(g(t)\).  Indeed,
\[
\begin{aligned}
    \mathcal{T}e(t)
    &\lesssim
    \eta\int_0^t K(t-\tau)\frac{1}{b(\tau)}\dd \tau \\
    &\lesssim
    \eta\int_0^t K(t-\tau)
    \frac{\sigma_{\eff}^2(\tau)}{b(\tau)}\dd \tau,
    \end{aligned}
\]
and
\[
\begin{aligned}
    &\mathcal{T}\left[
        \eta\int_0^{\cdot}
        K(\cdot-r)\frac{\sigma_{\eff}^2(r)}{b(r)}\dd r
    \right](t)\\
    &\lesssim
    \eta
    \int_0^t
    K(t-r)
    \frac{\eta\sigma_{\eff}^2(r)}{b(r)}
    \dd r .
\end{aligned}
\]
Thus \(\mathcal{T}g(t)\lesssim g(t)\), and consequently
\[
    f(t)
    \asympconst
    e(t)
    +
    \eta\int_0^t
    K(t-\tau)\frac{\sigma_{\eff}^2(\tau)}{b(\tau)}\dd \tau.
\]
Substituting \(e(t)\asympconst t^{-s}\) proves the theorem.
\end{proof}

\section{Proof of Theorem~\ref{thm:fixed-b}}
\label{pfofthm:fixed-b}
\begin{proof}
The optimization problem is
\[
    \max_{0\leq p\leq 1}
    \int_0^T
    \frac{K(T-t)}{b(t)}p(t)\dd t
\]
subject to
\[
    \int_0^T p(t)b(t)\dd t=\rho D.
\]
Introduce the measure
\(\dd \mu(t):=b(t)\dd t\).
Then the constraint becomes
\(\int_0^T p(t)\dd \mu(t)=\rho D\),
and the objective becomes
\[
    \int_0^T S_b(t)p(t)\dd \mu(t).
\]
This is a continuous knapsack problem. The optimizer fills the high-quality
budget on the set where \(S_b(t)\) is largest, giving the stated bang-bang
structure.
\end{proof}

\providecommand{\BD}{\mathcal B_D}

\section{Proof of Theorem~\ref{thm:joint-optimal}}
\label{app:proof-joint-hard}

Throughout this section we write
\(\sigma_1^2:=\sigma_{\good}^2\) and \(\sigma_2^2:=\sigma_{\bad}^2\)
for brevity, and denote the objective of~\eqref{eq:joint_problem} by
\(\mathcal{E}[T,b,p]\).

For data budget \(D>0\), define the feasible set
\[
\begin{aligned}
\BD:=\bigl\{(T,b,p):{}&\, T>0,\; b\in L^1(0,T),\\
&\, p\in L^\infty(0,T),\\
&\, b(t)\ge B,\; 0\le p(t)\le1\ \text{a.e.},\\
&\, \textstyle\int_0^T\! b(t)\dd t=D,\\
&\, \textstyle\int_0^T\! b(t)p(t)\dd t=\rho D\bigr\}.
\end{aligned}
\]
We consider
\[
\mathcal{E}^\star(D):=\min_{(T,b,p)\in\BD}\mathcal{E}[T,b,p],
\]
where
\[
\mathcal{E}[T,b,p]
:=
T^{-s}
+\eta\!\int_0^T
\frac{K(T-t)\,\sigma_{\eff}^2(t)}{b(t)}
\dd t .
\]

For the power kernel \(K(u)=(u+1)^{-\gamma}\) with
\(\gamma:=2-1/\beta>0\), the horizon is bounded by \(T\le D/B\) (from
\(b\ge B\)) and bounded away from zero by the bias term \(T^{-s}\), so the
problem is well posed and the standard pointwise Lagrangian characterization
for decomposable integral programs applies.

\begin{lemma}[Canonical structure of an optimal joint schedule]
\label{lem:canonical-joint-schedule}
Assume
\[
B>0,\quad D>0,\quad 0<\rho<1,
\quad \sigma_1<\sigma_2 .
\]
Write \(w(t):=K(T^\star-t)\). Then an optimal solution can be chosen to have the following canonical form.

\begin{enumerate}
    \item \textbf{Strict-switching case.}
    There exist \(T_s\in[0,T^\star]\) and constants \(C_0,C_1>0\) such that
    \[
    p^\star(t)=
    \begin{cases}
        0, & 0\le t<T_s,\\
        1, & T_s\le t\le T^\star,
    \end{cases}
    \]
    and
    \[
    b^\star(t)=
    \begin{cases}
        \max\bigl\{B,\,C_0\sqrt{w(t)}\bigr\},
        & t<T_s,\\[2pt]
        \max\bigl\{B,\,C_1\sqrt{w(t)}\bigr\},
        & t\ge T_s.
    \end{cases}
    \]

    \item \textbf{Terminal-tie case.}
    There exist \(T_s\in[0,T^\star]\) and \(C>0\) such that
    \(p^\star(t)=0\) for \(t<T_s\), while on
    \(I_s:=[T_s,T^\star]\) both labels are pointwise optimal and
    \(p^\star(t)\in\{0,1\}\) may be chosen measurably subject to
    \[
    \int_0^{T^\star}b^\star(t)p^\star(t)\dd t=\rho D.
    \]
    The batch schedule is
    \[
    b^\star(t)=
    \begin{cases}
        \max\bigl\{B,\,C\sqrt{w(t)}\bigr\}, & t<T_s,\\[2pt]
        r^{p^\star(t)}\,C\sqrt{w(t)}, & t\in I_s,
    \end{cases}
    \]
    where \(r=\sigma_1^2/\sigma_2^2\), with \(rC\sqrt{w(t)}\ge B\) on \(I_s\).
\end{enumerate}
\end{lemma}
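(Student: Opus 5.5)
We fix the optimal horizon \(T^\star\) and treat the problem in \((b,p)\) as a decomposable integral program. Introduce multipliers \(\mu\) for \(\int b=D\) and \(\nu\) for \(\int bp=\rho D\), and write the pointwise Lagrangian
\[
L(t;b,p)=\eta\,w(t)\,\frac{\sigma_2^2-p(\sigma_2^2-\sigma_1^2)}{b}+\mu b+\nu p b ,
\]
to be minimized over \(b\ge B\), \(p\in[0,1]\) for a.e.\ \(t\). Existence of minimizers and the absence of a duality gap will follow from the standard Lyapunov-convexity argument for integral functionals over nonatomic measures. This is the ``standard pointwise Lagrangian characterization'' invoked before the lemma. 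The argument also lets us restrict to \(p\in\{0,1\}\), since relaxed mixtures are realized by measurable sets.

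\textbf{Step 1: fixed label.} For fixed \(p=i\in\{0,1\}\), the map \(b\mapsto \eta w\sigma_{(i)}^2/b+(\mu+\nu i)b\) is convex. Here \(\sigma_{(0)}^2=\sigma_2^2\) and \(\sigma_{(1)}^2=\sigma_1^2\). Its minimizer over \(b\ge B\) is \(\max\{B,C_i\sqrt{w(t)}\}\) with \(C_i=\sqrt{\eta\sigma_{(i)}^2/(\mu+\nu i)}\). Positivity of \(\mu\) and \(\mu+\nu\) follows because otherwise \(b\to\infty\) would be profitable, contradicting \(\int b=D\).

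\textbf{Step 2: choice of label.} Compare the two label values \(\Phi_i(w)=\min_{b\ge B}L(t;b,i)\), which depend on \(t\) only through \(w(t)\). On the unconstrained branch, \(\Phi_i=2\sqrt{\eta w\sigma_{(i)}^2(\mu+\nu i)}\), so both labels scale like \(\sqrt w\). Their difference therefore has a fixed sign unless \(\sigma_1^2(\mu+\nu)=\sigma_2^2\mu\). That equality is exactly the tie, and then \(C_1=rC_0\). On the constrained branch, \(\Phi_i=\eta w\sigma_{(i)}^2/B+(\mu+\nu i)B\). Hence \(\Phi_1-\Phi_0=-\eta w(\sigma_2^2-\sigma_1^2)/B+\nu B\), which is strictly decreasing in \(w\).

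Next, show that \(\Phi_1-\Phi_0\), as a function of \(w\), changes sign at most once: from positive at small \(w\) to nonpositive at large \(w\). To do this, check monotonicity across the kinks at \(w=B^2/C_i^2\), using that each \(\Phi_i\) is concave in \(w\) and a minimum of affine functions of \(w\). Since \(w(t)=K(T^\star-t)\) is increasing in \(t\), the set where \(p=1\) is preferred is then an interval \([T_s,T^\star]\).

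This yields two cases:
\begin{enumerate}
\item If \(\Phi_1<\Phi_0\) strictly on \((T_s,T^\star]\), we get the strict-switching form. Step 1 then gives the batch schedule with constants \(C_0,C_1\).
\item If \(\Phi_1=\Phi_0\) on a set of positive measure, that set lies in the unconstrained regime for both labels. This forces the tie relation and \(C_1=rC_0=:rC\), and gives the terminal-tie form. The tie region must be terminal, with \(rC\sqrt w\ge B\), because the constrained-branch difference is strictly monotone.
\end{enumerate}
On the tie set any measurable \(\{0,1\}\) choice meeting the \(\rho D\) budget is optimal, by the knapsack reasoning of Theorem~\ref{thm:fixed-b}.

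\textbf{Main obstacle.} The delicate step is the single-crossing claim for \(\Phi_1-\Phi_0\) across the regime boundaries. I would first try to write \(\Phi_1-\Phi_0\) piecewise on the at most three \(w\)-intervals determined by the two kinks, and verify its derivative sign on each. If that fails, I would fall back on an exchange argument. In that argument, swap high-quality mass from an earlier time to a later time with larger \(w\) while adjusting \(b\) to preserve both budgets, and show the objective does not increase.

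A secondary technical point is justifying the Lagrangian (Slater-type) condition when \(T^\star\) is close to \(D/B\), where the feasible set is thin. I would handle this by perturbing \(T\) and using continuity of \(\mathcal E\).
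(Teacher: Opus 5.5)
Your outline follows the paper's route. You fix $T^\star$, pass to the pointwise Lagrangian, and compare the reduced endpoint costs $\Phi_0,\Phi_1$ as functions of $w$. You then use that $w(t)$ is increasing and read off the two canonical forms, with the tie forcing $\sigma_1^2(\mu+\nu)=\sigma_2^2\mu$ and hence $C_1=rC_0$.

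The gap is the step you flag as the main obstacle, which is the real content of the lemma. The single-crossing property is left unproved, and the tool you propose does not deliver it. Concavity of each $\Phi_i$ gives no control over sign changes of $\Phi_1-\Phi_0$, and the difference is not piecewise monotone in general. Write $a_0=\mu$, $a_1=\mu+\nu$, $c_0=\eta\sigma_2^2$, $c_1=\eta\sigma_1^2$, with kinks $w_i=a_iB^2/c_i$.
\begin{itemize}
\item On the mixed interval $[w_0,w_1]$ the difference is $a_1B+c_1w/B-2\sqrt{a_0c_0w}$, which is strictly convex.
\item On $[w_1,\infty)$ it equals $2(\sqrt{a_1c_1}-\sqrt{a_0c_0})\sqrt{w}$, which is increasing when $a_1c_1>a_0c_0$.
\end{itemize}
So a derivative-sign check on the pieces fails in exactly the case that must be excluded.

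The paper instead works with the ratio $R=\Phi_1/\Phi_0$, which is legitimate since $\Phi_0>0$. It checks that $R$ is nonincreasing on all three regions, with $R(0^+)=a_1/a_0$ and $R(\infty)=\sqrt{a_1c_1/(a_0c_0)}$. The three cases then fall out:
\begin{itemize}
\item $a_1c_1>a_0c_0$ gives $R>1$ everywhere, so $p^\star=0$ a.e., contradicting $\rho D>0$.
\item $a_1c_1<a_0c_0$ gives strict switching.
\item $a_1c_1=a_0c_0$ gives a tie exactly on $[w_1,\infty)$.
\end{itemize}
Alternatively, you could split on the sign of $a_1c_1-a_0c_0$ first. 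If it is $\le 0$, the difference really is nonincreasing: its derivative $c_1/B-\sqrt{a_0c_0/w}$ on the mixed interval is increasing and $\le 0$ at $w_1$. If it is $>0$, you must show positivity separately; on the mixed interval the difference is at least $\frac{B}{c_1}(a_1c_1-a_0c_0)>0$.

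You also never establish $\nu>0$; your Step 1 yields only $\mu>0$ and $\mu+\nu>0$. Yet ``positive at small $w$'' is precisely $\Phi_1(0^+)-\Phi_0(0^+)=\nu B>0$. The ordering $w_0<w_1$ of the kinks and the terminality of the $p=1$ set also depend on it, as does the ratio argument above.

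The paper gets $\nu>0$ from the low-quality budget. If $\nu\le 0$, then $L_1-L_0=-\eta(\sigma_2^2-\sigma_1^2)w/b+\nu b<0$ for every $b\ge B$. This forces $p^\star=1$ a.e., contradicting $\int_0^{T^\star}b^\star(t)(1-p^\star(t))\dd t=(1-\rho)D>0$.

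With these two ingredients supplied, the rest of your argument goes through. That includes your claim that a positive-measure tie set lies in the doubly unconstrained region. On the other two regions the difference is strictly monotone or strictly convex, so it has finitely many zeros, and $w(t)$ is strictly increasing.
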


\begin{proof}
Fix an optimal solution and write it as
\[
(T^\star,b^\star,p^\star).
\]
Since \((T^\star)^{-s}\) is a constant, it suffices to minimize the noise integral
over \((b,p)\) at this fixed horizon. Recall \(w(t)=K(T^\star-t)\),
which is nondecreasing in \(t\).

Introduce Lagrange multipliers \(\mu,\nu\) for the data and quality
constraints. Since the integrand is pointwise separable in \(t\), the
optimizer satisfies the following pointwise condition for a.e.\ \(t\):
\[
\begin{aligned}
&(b^\star(t),p^\star(t))
\in
\mathop{\arg\min}_{\substack{b\ge B\\0\le p\le1}}
\biggl\{
\frac{w(t)}{b}
\bigl(\sigma_2^2\\
&\qquad{}-(\sigma_2^2-\sigma_1^2)p\bigr)
+\mu b+\nu bp
\biggr\}.
\end{aligned}
\]

Let
\[
\Delta:=\sigma_2^2-\sigma_1^2>0.
\]
For \(i=0,1\), define
\[
a_0:=\mu,\qquad c_0:=\sigma_2^2,
\]
and
\[
a_1:=\mu+\nu,\qquad c_1:=\sigma_1^2.
\]
For a fixed \(t, b\), the two endpoint label costs are
\[
L_0(t,b):=\frac{c_0w(t)}{b}+a_0b,
\]
and
\[
L_1(t,b):=\frac{c_1w(t)}{b}+a_1b.
\]
The pointwise Lagrangian can be written as
\[
(1-p)L_0(t,b)+pL_1(t,b).
\]

Define the reduced endpoint costs
\[
\Phi_i(w)
:=
\min_{b\ge B}
\left\{
\frac{c_iw}{b}+a_ib
\right\},
\qquad i=0,1.
\]
Then pointwise optimality gives
\[
p^\star(t)=0
\quad\text{if}\quad
\Phi_0(w(t))<\Phi_1(w(t)),
\]
and
\[
p^\star(t)=1
\quad\text{if}\quad
\Phi_1(w(t))<\Phi_0(w(t)).
\]
Indeed, if \(\Phi_0(w)<\Phi_1(w)\), then for every \(b\ge B\) and
\(p>0\),
\[
(1-p)L_0(t,b)+pL_1(t,b)
>
\Phi_0(w),
\]
while \(p=0\) and a minimizer of \(L_0\) achieve \(\Phi_0(w)\). The case
\(\Phi_1(w)<\Phi_0(w)\) is symmetric. Thus non-endpoint choices of \(p\) can
only occur on tie sets where
\[
\Phi_0(w(t))=\Phi_1(w(t)).
\]
On such tie sets, possible fractional values of \(p\) may be removed without
changing feasibility or the objective. Indeed, let
\[
F:=\{t:0<p^\star(t)<1\}.
\]
On \(F\), the first-order condition in \(p\) gives
\[
\Delta\frac{w(t)}{b^\star(t)}=\nu b^\star(t).
\]
Since the measure \(b^\star(t)\dd t\) is nonatomic, there exists a measurable set
\(S\subseteq F\) such that
\[
\int_S b^\star(t)\dd t=\int_F b^\star(t)p^\star(t)\dd t.
\]
Define
\[
\tilde p(t)=
\begin{cases}
1, & t\in S,\\
0, & t\in F\setminus S,\\
p^\star(t), & t\notin F.
\end{cases}
\]
Then
\[
\int_0^{T^\star}b^\star(t)\tilde p(t)\dd t
=
\int_0^{T^\star}b^\star(t)p^\star(t)\dd t.
\]
Moreover, the objective change on \(F\) is
\[
\begin{aligned}
&-\Delta\!\int_F\!\frac{w(t)}{b^\star(t)}
(\tilde p(t)-p^\star(t))\dd t\\
&\quad=
-\nu\!\int_F\! b^\star(t)(\tilde p(t)-p^\star(t))\dd t
=0.
\end{aligned}
\]
Hence we may choose an optimal solution with
\[
p^\star(t)\in\{0,1\}
\quad\text{a.e.}
\]

We next show that
\[
\nu>0.
\]
Suppose \(\nu\le0\). For any \(t\) with \(w(t)>0\) and any \(b\ge B\),
\[
L_1(t,b)-L_0(t,b)
=
-\frac{\Delta w(t)}{b}+\nu b
<0.
\]
Hence \(p^\star(t)=1\) for a.e.\ \(t\in[0,T^\star]\), contradicting
\[
\int_0^{T^\star}b^\star(t)(1-p^\star(t))\dd t=(1-\rho)D>0.
\]
Therefore
\[
\nu>0.
\]
Consequently,
\[
a_1>a_0,
\qquad
c_1<c_0.
\]
Moreover, the endpoint reduced costs are finite and attained only if
\(a_0>0\) and \(a_1>0\).

We now prove that the \(p=1\) region is terminal (i.e., corresponds to
late training times) and identify the two possible canonical structures.
A direct calculation gives
\[
\Phi_i(w)
=
\begin{cases}
a_iB+\dfrac{c_iw}{B},
& 0<w\le w_i,\\[6pt]
2\sqrt{a_ic_iw},
& w\ge w_i,
\end{cases}
\]
where \(w_i:=a_iB^2/c_i\).
Since \(a_1>a_0\) and \(c_1<c_0\), we have \(w_0<w_1\).

\medskip
\noindent\textbf{Claim.}\;
The ratio \(R(w):=\Phi_1(w)/\Phi_0(w)\) is nonincreasing on
\((0,\infty)\), with
\[
\lim_{w\downarrow0}R(w)=\frac{a_1}{a_0}>1,
\quad
\lim_{w\to\infty}R(w)=\sqrt{\frac{a_1c_1}{a_0c_0}}.
\]

\begin{proof}[Proof of Claim]
We verify the monotonicity on the three regions determined by
\(0<w_0<w_1\).

\emph{Region \(0<w\le w_0\)} (both constrained):
\[
R(w)=\frac{a_1B+c_1w/B}{a_0B+c_0w/B}.
\]
Differentiating, the numerator of \(R'(w)\) is proportional to
\(a_0c_1-a_1c_0<0\). Hence \(R\) is strictly decreasing.

\emph{Region \(w_0\le w\le w_1\)} (\(\Phi_0\) unconstrained, \(\Phi_1\)
constrained):
\[
\begin{aligned}
R(w)
&=
\frac{a_1B+c_1w/B}{2\sqrt{a_0c_0w}}\\
&=
\frac{a_1B}{2\sqrt{a_0c_0}}\,w^{-1/2}
+
\frac{c_1}{2B\sqrt{a_0c_0}}\,w^{1/2}.
\end{aligned}
\]
Differentiating, the sign of \(R'(w)\) is that of \(c_1w/B-a_1B\),
which is negative for \(w<w_1\). Hence \(R\) is strictly decreasing.

\emph{Region \(w\ge w_1\)} (both unconstrained):
\[
R(w)
=
\frac{2\sqrt{a_1c_1w}}{2\sqrt{a_0c_0w}}
=
\sqrt{\frac{a_1c_1}{a_0c_0}},
\]
which is constant.
Continuity at \(w_0\) and \(w_1\) is verified by direct substitution.
\end{proof}

Since \(R\) is nonincreasing with \(R(0^+)>1\), the set
\(\{w:R(w)\le1\}\) is either empty or a terminal half-line. Because
\(w(t)=K(T^\star\!-\!t)\) is nondecreasing in \(t\), the preimage
\(\{t:p^\star(t)=1\}\) is a terminal interval in time (up to the threshold
level set, on which the split may be chosen measurably). The structure
depends on \(\lim_{w\to\infty}R(w)=\sqrt{a_1c_1/(a_0c_0)}\).

\paragraph{Case 1: \(a_1c_1>a_0c_0\).}
Then \(R(w)>1\) for all \(w>0\), so pointwise optimality forces
\(p^\star(t)=0\) a.e., contradicting
\(\int_0^{T^\star}b^\star(t)p^\star(t)\dd t=\rho D>0\).
This case is excluded.

\paragraph{Case 2: \(a_1c_1<a_0c_0\) (strict-switching).}
Then \(R(\infty)<1\). From the proof of claim, the equation \(R(w)=1\) has a unique solution. Hence there
exists \(T_s\in[0,T^\star]\) such that
\[
p^\star(t)
=
\begin{cases}
0, & 0\le t<T_s,\\
1, & T_s\le t\le T^\star.
\end{cases}
\]
On each region, the optimal batch size is obtained by minimizing
\(L_i(t,b)=c_iw(t)/b+a_ib\) over \(b\ge B\), giving
\(b^\star(t)=\max\{B,\,\sqrt{c_i/a_i}\sqrt{w(t)}\}\). Writing
\[
\begin{aligned}
C_0&:=\sqrt{\tfrac{c_0}{a_0}}=\sqrt{\tfrac{\sigma_2^2}{\mu}},\\
C_1&:=\sqrt{\tfrac{c_1}{a_1}}=\sqrt{\tfrac{\sigma_1^2}{\mu+\nu}},
\end{aligned}
\]
we obtain
\[
b^\star(t)
=
\begin{cases}
\max\{B,\,C_0\sqrt{w(t)}\},
& 0\le t<T_s,\\[2pt]
\max\{B,\,C_1\sqrt{w(t)}\},
& T_s\le t\le T^\star.
\end{cases}
\]

\paragraph{Case 3: \(a_1c_1=a_0c_0\) (terminal-tie).}
Then \(R(w)\ge1\) for all \(w\), with \(R(w)=1\) exactly on
\([w_1,\infty)\) (since \(R\) is strictly decreasing on \((0,w_1)\) and
constant on \([w_1,\infty)\)). The corresponding tie interval in time is
\(I_s:=[T_s,T^\star]\). Before \(I_s\), we have \(p^\star(t)=0\). On \(I_s\),
both labels are pointwise optimal, and \(p^\star(t)\in\{0,1\}\) may be
chosen measurably subject to
\(\int_0^{T^\star}b^\star(t)p^\star(t)\dd t=\rho D\).

From \(a_1c_1=a_0c_0\), i.e., \((\mu+\nu)\sigma_1^2=\mu\sigma_2^2\),
we obtain \(C_1=rC_0\) where \(r=\sigma_1^2/\sigma_2^2\in(0,1)\).
Writing \(C:=C_0\):
\[
b^\star(t)
\!=\!
\begin{cases}
\max\bigl\{B,\,C\!\sqrt{w(t)}\bigr\},
& t<T_s,\\[2pt]
C\sqrt{w(t)},
& t\!\in\! I_s,\; p^\star\!=\!0,\\[2pt]
rC\sqrt{w(t)},
& t\!\in\! I_s,\; p^\star\!=\!1.
\end{cases}
\]
Since \(I_s\) lies in the unconstrained region \(w\ge w_1\), we have
\(rC\sqrt{w(t)}\ge B\) on \(I_s\).

\medskip
After excluding Case~1, only the strict-switching and terminal-tie
canonical forms remain. This proves the lemma.
\end{proof}

\begin{proof}[Proof of Theorem~\ref{thm:joint-optimal}]
We prove the noise-limited regime (\(s>1-1/\beta\)) and the signal-limited
regime (\(s<1-1/\beta\)) separately.
Throughout, write \(\delta:=1/(2\beta)\).

\paragraph{Step 1: the unconstrained problem.}
First ignore the lower-bound constraint \(b(t)\geq B\). Fix \(T>0\). By Lemma~\ref{lem:canonical-joint-schedule}, an optimizer can be chosen with
\[
p(t)\in\{0,1\}
\]
a.e. Let
\[
S:=\{t:p(t)=1\},
\quad
S^c:=\{t:p(t)=0\}.
\]
For fixed \(S\), Cauchy-Schwarz gives
\[
\int_S\frac{K(T-t)}{b(t)}\dd t
\geq
\frac{\left(\int_S \sqrt{K(T\!-\!t)}\dd t\right)^2}
{\int_S b(t)\dd t},
\]
and similarly on \(S^c\). Equality holds iff \(b(t)\propto \sqrt{K(T\!-\!t)}\)
on each region.

Let
\[
\begin{aligned}
A_1&:=\int_S \!\sqrt{K(T\!-\!t)}\dd t,\\
A_0&:=\int_{S^c}\!\sqrt{K(T\!-\!t)}\dd t,
\end{aligned}
\]
and \(A(T):=A_0+A_1=\int_0^T\!\sqrt{K(T\!-\!t)}\dd t\).
The data constraints give
\[
\int_S b(t)\dd t=\rho D,
\qquad
\int_{S^c}b(t)\dd t=(1-\rho)D.
\]
Thus, for fixed \(T\), the unconstrained variance term
\[
\sigma_2^2\!\int_{S^c}\!\frac{K(T\!-\!t)}{b(t)}\dd t
\;+\;
\sigma_1^2\!\int_S\!\frac{K(T\!-\!t)}{b(t)}\dd t
\]
is at least
\[
\frac{\sigma_2^2A_0^2}{(1-\rho)D}
+
\frac{\sigma_1^2A_1^2}{\rho D}.
\]
Minimizing over \(A_0+A_1=A(T)\) gives
\begin{align*}
A_0^\star
&=
\frac{(1-\rho)\sigma_1^2}{(1-\rho)\sigma_1^2+\rho\sigma_2^2}\,A(T),\\[4pt]
A_1^\star
&=
\frac{\rho\sigma_2^2}{(1-\rho)\sigma_1^2+\rho\sigma_2^2}\,A(T).
\end{align*}
The minimized variance term is
\[
\frac{\eta\,\kappa\, A(T)^2}{D},
\qquad
\kappa
=
\frac{\sigma_1^2\sigma_2^2}{(1-\rho)\sigma_1^2+\rho\sigma_2^2}.
\]
Hence the unconstrained problem reduces to
\[
\min_{T>0}
\left\{
T^{-s}+\frac{\eta\,\kappa\, A(T)^2}{D}
\right\}.
\]

The first-order condition is
\[
-sT^{-s-1}
+
\frac{2\eta\,\kappa}{D}A(T)A'(T)=0.
\]
Since
\[
A'(T)=(T+1)^{\delta-1}
\]
and
\[
A(T)=\frac{T^\delta}{\delta}(1+o(1)),
\]
we obtain
\[
T_{\rm unc}
=
\left(
\frac{sD}{4\beta\,\eta\,\kappa}
\right)^{\frac{1}{s+1/\beta}}
(1+o(1)).
\]
At this optimum,
\[
\frac{\eta\,\kappa\, A(T_{\rm unc})^2}{D}
=
s\beta T_{\rm unc}^{-s}(1+o(1)).
\]
Therefore
\[
\mathcal{E}_{\rm unc}^\star(D)
=
(1+s\beta)T_{\rm unc}^{-s}(1+o(1)).
\]

\paragraph{Step 2: the noise-limited regime.}
We first record the explicit form of the unconstrained optimal batch
profile. From the Cauchy-Schwarz step in Step~1, equality requires
\(b(t)\propto\sqrt{K(T-t)}\) on each of the two regions \(S\) and \(S^c\).
Hence the unconstrained optimizer at horizon \(T_{\rm unc}\) takes the
piecewise form
\[
b^\star(t)
=
\begin{cases}
C_0\sqrt{K(T_{\rm unc}\!-\!t)}, & t\in S^c\ (p^\star=0),\\[2pt]
C_1\sqrt{K(T_{\rm unc}\!-\!t)}, & t\in S\ (p^\star=1),
\end{cases}
\]
where \(C_0,C_1\) are the same constants as in
Lemma~\ref{lem:canonical-joint-schedule}, evaluated at \(T=T_{\rm unc}\). They
are determined by the data constraints
\(\int_{S^c}b^\star\!=\!(1\!-\!\rho)D\) and \(\int_S b^\star\!=\!\rho D\):
\[
C_0=\frac{(1-\rho)D}{A_0^\star},
\qquad
C_1=\frac{\rho D}{A_1^\star}.
\]
Substituting the optimal split \((A_0^\star,A_1^\star)\) found in Step~1 yields
\[
\begin{aligned}
C_0=\frac{(1\!-\!\rho)\sigma_1^2\!+\!\rho\sigma_2^2}{\sigma_1^2}\cdot
\frac{D}{A(T_{\rm unc})},
\\
C_1=\frac{(1\!-\!\rho)\sigma_1^2\!+\!\rho\sigma_2^2}{\sigma_2^2}\cdot
\frac{D}{A(T_{\rm unc})}.
\end{aligned}
\]
Since \(\sigma_1<\sigma_2\), we have \(C_1<C_0\); since \(K\) is decreasing,
\(\sqrt{K(T_{\rm unc}\!-\!t)}\) is minimized at \(t=0\), and consequently so is
\(b^\star(t)\). Hence
\[
\min_{t\in[0,T_{\rm unc}]}b^\star(t)
=
C_1(T_{\rm unc}+1)^{\delta-1}.
\]
Using \(A(T_{\rm unc})\asympconst T_{\rm unc}^\delta/\delta\) together with
the explicit form of \(C_1\), this gives
\[
\min_t b^\star(t)
\asympconst
\frac{D}{T_{\rm unc}}.
\]

If \(s>1-1/\beta\), then \(s+1/\beta>1\) and so \(T_{\rm unc}=o(D)\) by
Step~1. Therefore
\[
\min_t b^\star(t)\to\infty
\quad\text{as}\quad
D\to\infty.
\]
For any fixed \(B\), the constraint \(b(t)\ge B\) is therefore inactive for
all sufficiently large \(D\), and the unconstrained solution is also the
constrained optimum.

The ratio
\(C_1/C_0=\sigma_1^2/\sigma_2^2=r\) places this regime in the
\emph{terminal-tie case} in Lemma~\ref{lem:canonical-joint-schedule}, with \(T_s=0\) and tie interval
\(I_s=[0,T^\star]\): the entire horizon lies in the unconstrained branch, and
\(S\subseteq[0,T^\star]\) may be chosen as any measurable set satisfying
\(\int_S b^\star(t)\dd t=\rho D\). This proves the noise-limited regime.

\paragraph{Step 3: the signal-limited regime and the strict-switching class.}
Now assume
\[
\beta>1,
\qquad
0<s<1-\frac1\beta.
\]
Recall \(\gamma:=2-1/\beta\), so that \(K(u)=(u+1)^{-\gamma}\).
By Step~1, the unconstrained optimizer satisfies
\[
T_{\rm unc}
\asympconst
\left(\frac{sD}{4\beta\,\eta\,\kappa}\right)^{\frac{1}{s+1/\beta}}.
\]
Since \(s+1/\beta<1\), we have
\[
\frac{T_{\rm unc}}{D}\to\infty,
\]
i.e., \(T_{\rm unc}\) grows faster than linearly in \(D\).
On the other hand, every feasible schedule satisfies
\[
D=\int_0^T b(t)\dd t\geq BT,
\]
so
\[
T\leq T_{\max}:=\frac{D}{B}.
\]
Thus the unconstrained optimizer is infeasible, and the lower-bound constraint
is active.

We first solve the problem asymptotically within the strict-switching
canonical family. This does not assume that this family is globally optimal;
it only computes the best comparison value among strict-switching schedules.

A strict-switching schedule has, in chronological order, up to four
possible phases:
\[
\begin{array}{l|cccc}
\text{phase} & \text{1} & \text{2} & \text{3} & \text{4}\\\hline
(p,b) & (0,B) & (0,{>}B) & (1,B) & (1,{>}B)\\
\text{duration} & T_1 & T_2 & T_3 & T_4
\end{array}
\]
That is, \(T_i\) denotes the length of phase \(i\), so that
\(T=T_1+T_2+T_3+T_4\). The \(p=0\) flat (resp.\ unconstrained) phase has
length \(T_1\) (resp.\ \(T_2\)); the \(p=1\) flat (resp.\ unconstrained)
phase has length \(T_3\) (resp.\ \(T_4\)). Any of the \(T_i\) may be zero.

\paragraph{Step 3a: optimization of the \(T_2=0\) strict-switching family.}
We first solve the strict-switching subfamily with \(T_2=0\), so that only
phases 1, 3, 4 are present. The \(p=0\) data constraint forces
\[
T_1=\frac{(1-\rho)D}{B}=(1-\rho)T_{\max}.
\]

Parametrize phase 4 by terminal age \(u:=T-t\in[0,T_4]\). On phase 4 take
\(p(t)=1\) and
\[
b(T\!-\!u)
=
B\!\left(
\frac{u+1}{T_4+1}
\right)^{\!\delta-1},
\quad 0\le u\le T_4.
\]
Since \(\delta<1\), this satisfies \(b(T-u)\ge B\), and it matches the lower
bound at \(u=T_4\).

The amount of \(p=1\) data consumed in phase 4 is
\[
\begin{aligned}
D_4
&=
\int_0^{T_4}
B\!\left(
\frac{u+1}{T_4+1}
\right)^{\!\delta-1}
\dd u\\
&=
\frac{B}{\delta}
\bigl[
(T_4+1)-(T_4+1)^{1-\delta}
\bigr].
\end{aligned}
\]
The remaining \(p=1\) data is carried by phase 3, so
\[
\rho D=BT_3+D_4,
\]
and hence
\[
T_3
=
\frac{\rho D}{B}
-
\frac{(T_4+1)-(T_4+1)^{1-\delta}}{\delta}.
\]
For the optimizer found below, \(T_4=o(D)\), so \(T_3>0\) for all sufficiently
large \(D\).

The total horizon is \(T=T_1+T_3+T_4\), and therefore
\[
T
=
\frac{D}{B}
-
\left(\frac1\delta-1\right)T_4
+
O(T_4^{\,1-\delta}).
\]
Consequently, in the regime \(1\ll T_4\ll D\),
\[
\begin{aligned}
T^{-s}
&=
B^sD^{-s}
+
\alpha_D T_4\\
&\quad+
O(D^{-s-1}T_4^{\,1-\delta})
+
O(D^{-s-2}T_4^{\,2}),
\end{aligned}
\]
where
\[
\alpha_D
:=
s\left(\frac1\delta-1\right)
B^{s+1}D^{-s-1}.
\]

We next compute the leading variance correction. Phase 3 corresponds to
terminal ages \(u\in[T_4,T_4+T_3]\). Its contribution is
\[
V_3(T_4)
=
\frac{\eta\,\sigma_1^2}{B}
\int_{T_4}^{T_4+T_3}
(u+1)^{-\gamma}
\dd u.
\]
Since \(T_3=O(D)\) and \(T_4=o(D)\),
\[
V_3(T_4)
=
\frac{\eta\,\sigma_1^2}{B(\gamma-1)}
T_4^{\,-(\gamma-1)}(1+o(1)).
\]
Phase 4 contributes
\[
V_4(T_4)
=
\int_0^{T_4}
\frac{\eta\,\sigma_1^2(u+1)^{-\gamma}}
{B\bigl(\tfrac{u+1}{T_4+1}\bigr)^{\delta-1}}
\dd u.
\]
Using
\[
\gamma=2-\frac1\beta=2-2\delta,
\]
we obtain
\[
\begin{aligned}
V_4(T_4)
&=
\frac{\eta\,\sigma_1^2}{B}
(T_4\!+\!1)^{\delta-1}
\!\int_0^{T_4}\!
(u\!+\!1)^{\delta-1}
\dd u\\
&=
\frac{\eta\,\sigma_1^2}{B\delta}\,
T_4^{\,-(\gamma-1)}(1+o(1)).
\end{aligned}
\]
Thus
\[
V_3(T_4)+V_4(T_4)
=
\xi\,T_4^{\,-(\gamma-1)}(1+o(1)),
\]
where
\[
\xi
:=
\frac{\eta\,\sigma_1^2}{B}
\left(
\frac1{\gamma-1}+\frac1\delta
\right).
\]

Hence the leading correction to \(B^sD^{-s}\) in the \(T_2=0\)
strict-switching family is
\[
\alpha_D T_4+\xi T_4^{\,-(\gamma-1)}.
\]
Minimizing this expression gives
\[
\alpha_D=(\gamma-1)\,\xi T_4^{\,-\gamma}.
\]
Since
\[
(\gamma-1)\!\left(\frac1{\gamma-1}+\frac1\delta\right)
=
\frac1\delta-1,
\]
we obtain the optimal phase-4 length within the \(T_2=0\)
strict-switching family:
\[
T_4
=
\left(
\frac{\eta\,\sigma_1^2}{sB^{s+2}}
\right)^{\!1/\gamma}
D^{\frac{s+1}{\gamma}}.
\]
Since \(s<\gamma-1\), \(T_4=o(D)\), confirming the assumption \(T_4\ll D\)
used above. This family achieves
\[
\mathcal{E}_{T_2=0}^\star(D)
=
B^sD^{-s}
+
O\left(
D^{-\frac{(s+1)(\gamma-1)}{\gamma}}
\right).
\]
In particular,
\[
\mathcal{E}_{T_2=0}^\star(D)=B^sD^{-s}+o(D^{-s}).
\]

\paragraph{Step 3b: elimination of \(T_2\) and conclusion for the strict-switching class.}
We now show that no asymptotically optimal strict-switching schedule can have
a nontrivial \(p=0\) unconstrained phase.

Let \((T,b,p)\) be any strict-switching schedule. Since \(b(t)\ge B\), every
feasible schedule satisfies
\(T\le T_{\max}\). Moreover, the variance term is nonnegative, and hence
\[
\mathcal{E}[T,b,p]\ge T^{-s}\ge T_{\max}^{-s}=B^sD^{-s}.
\]
On the other hand, Step 3a constructs a \(T_2=0\) strict-switching schedule
with value
\[
B^sD^{-s}+o(D^{-s}).
\]
Therefore, any strict-switching schedule that can be asymptotically optimal
must satisfy
\[
T=T_{\max}(1+o(1)).
\]
Indeed, if \(T\le (1-c)T_{\max}\) along a subsequence for some \(c>0\), then
\[
T^{-s}\ge (1-c)^{-s}B^sD^{-s},
\]
whose leading constant is strictly larger than the value achieved in Step 3a.

Consequently,
\[
T_{\max}-T=o(T_{\max}).
\]
Using
\[
D=\int_0^T b(t)\dd t
=
BT+\int_0^T(b(t)-B)\dd t,
\]
we get
\[
\frac1B
\int_0^T(b(t)-B)\dd t
=
T_{\max}-T
=
o(T_{\max}).
\]
Thus the total excess data above the lower bound is \(o(D)\).

Splitting \(b=B+(b-B)\) on \(\{p=1\}\) and using that the total excess above
the lower bound is \(o(D)\), the \(p=1\) data constraint gives
\[
\begin{aligned}
\rho D
&=\int_{\{p=1\}}\!b(t)\dd t\\
&=B\!\int_{\{p=1\}}\!\dd t+\int_{\{p=1\}}\!(b-B)\dd t\\
&= B\!\int_{\{p=1\}}\!\dd t+o(D),
\end{aligned}
\]
so the total time occupied by the \(p=1\) phases satisfies
\[
\int_{\{p=1\}}\!\dd t\;\ge\;\rho T_{\max}-o(T_{\max}).
\]
Hence the \(p=0\) unconstrained phase, if present, lies at terminal ages
\[
u\ge \rho T_{\max}-o(T_{\max})=O(D).
\]
On this phase,
\[
K(u)=(u+1)^{-\gamma}\le C D^{-\gamma}
\]
for some constant \(C>0\) independent of \(D\).

Suppose the \(p=0\) unconstrained phase has positive excess data. Define
\[
\Delta_2
:=
\frac1B
\int_{T_1}^{T_1+T_2}\!(b(t)-B)\dd t.
\]
Replace this phase by a \(p=0\), \(b=B\) segment carrying the same amount of
\(p=0\) data. This preserves both data constraints and increases the total
horizon by \(\Delta_2\).

The modification takes place only at terminal ages \(O(D)\). Therefore
the variance can increase by at most
\[
C D^{-\gamma}\Delta_2.
\]
On the other hand, since the original horizon satisfies \(T=T_{\max}(1+o(1))\), the
increase of the horizon decreases the bias term by at least
\[
cD^{-s-1}\Delta_2
\]
for some constant \(c>0\) independent of \(D\), and for all sufficiently
large \(D\). In the signal-limited regime,
\(s<1-1/\beta\), and hence
\[
s+1<\gamma.
\]
Therefore
\[
D^{-s-1}\gg D^{-\gamma}.
\]
The bias improvement strictly dominates the possible variance loss. Thus any
positive \(p=0\) unconstrained phase can be removed to improve the objective
for all sufficiently large \(D\). Hence an asymptotically optimal
strict-switching schedule must satisfy
\[
T_2=0.
\]

Therefore the \(T_2=0\) family optimized in Step 3a is asymptotically
exhaustive within the strict-switching class. The corresponding phase
durations and objective value are given by Step 3a; their explicit forms
are recorded in Step 5 below.

\paragraph{Step 4: exclusion of terminal-tie schedules.}
It remains to show that terminal-tie schedules cannot improve on the
strict-switching value.

Consider any feasible terminal-tie schedule. All \(p=1\) data lies in a
terminal tie interval. Let \(L\) be the length of this interval and write
\(u=T-t\in[0,L]\). Whenever \(p=1\) in this interval,
\[
b(T-u)=rC(u+1)^{\delta-1}.
\]
Since the tie interval is contained in the region where the \(p=1\)
unconstrained branch is active,
\[
rC(L+1)^{\delta-1}\geq B.
\]
Therefore, on the \(p=1\) part of the tie interval,
\[
\frac{b(T-u)}{B}
\geq
\left(\frac{u+1}{L+1}\right)^{\delta-1}.
\]

Let
\[
S:=\{u\in[0,L]:p(T-u)=1\},
\quad
m:=|S|.
\]
The \(p=1\) data constraint gives
\[
\rho T_{\max}
=
\int_S \frac{b(T-u)}{B}\dd u.
\]
Hence
\[
\rho T_{\max}
\geq
\int_S
\left(\frac{u+1}{L+1}\right)^{\delta-1}\dd u.
\]
The integrand \(f(u):=\bigl(\tfrac{u+1}{L+1}\bigr)^{\delta-1}\) is
decreasing in \(u\). By the rearrangement inequality, for any measurable
\(S\subseteq[0,L]\) with \(|S|=m\),
\[
\begin{aligned}
\int_S f(u)\dd u
\;&\ge\;
\int_{L-m}^{L} f(u)\dd u \\
&=
\frac{L+1}{\delta}\!\left[
1-\left(1-\frac{m}{L+1}\right)^{\!\delta}
\right].
\end{aligned}
\]
(Since \(f\) is decreasing, replacing \(S\) by the right-most subset of equal
measure can only reduce the integral.) Combining with the previous
inequality gives
\[
\rho T_{\max}
\geq
\frac{L+1}{\delta}\!\left[
1-\left(1-\frac{m}{L+1}\right)^{\!\delta}
\right].
\]
Solving for \(m\),
\[
m
\leq
(L+1)
\left[
1-
\left(
1-\frac{\delta\rho T_{\max}}{L+1}
\right)^{1/\delta}
\right].
\]

We now bound this uniformly over \(L\leq T\leq T_{\max}\). If
\[
L+1\leq \delta\rho T_{\max},
\]
then \(m\leq L+1\leq\delta\rho T_{\max}\), which is already strictly smaller
than \(\rho T_{\max}\) by a linear amount. Otherwise the function
\[
x\mapsto
x\left[
1-\left(1-\frac{\delta\rho T_{\max}}{x}\right)^{\!1/\delta}
\right]
\]
is increasing for \(x>\delta\rho T_{\max}\). Indeed, with
\(z=\delta\rho T_{\max}/x\), it equals
\[
\delta\rho T_{\max}\,\frac{1-(1-z)^{1/\delta}}{z},
\]
and the last ratio is decreasing in \(z\), because
\[
z\mapsto 1-(1-z)^{1/\delta}
\]
is concave and vanishes at \(0\). Hence, since \(L+1\leq T_{\max}+1\),
\[
m
\leq
T_{\max}\!\left[
1-(1-\delta\rho)^{1/\delta}
\right]
+o(T_{\max}).
\]
Define
\[
c_{\rho,\delta}
:=
\rho-
\left[
1-(1-\delta\rho)^{1/\delta}
\right].
\]
Because \(0<\rho<1\) and \(0<\delta<1\),
\[
(1-\delta\rho)^{1/\delta}>1-\rho.
\]
Thus
\[
c_{\rho,\delta}>0,
\]
and
\[
m\leq \rho T_{\max}-c_{\rho,\delta}T_{\max}+o(T_{\max}).
\]

The time spent on \(p=0\) is at most
\[
(1-\rho)T_{\max},
\]
because \(b\geq B\). Hence any terminal-tie schedule satisfies
\[
T
\leq
(1-\rho)T_{\max}
+
\bigl(\rho T_{\max}-c_{\rho,\delta}T_{\max}+o(T_{\max})\bigr),
\]
and therefore
\[
T\leq (1-c_{\rho,\delta})T_{\max}+o(T_{\max}).
\]
Its bias term is therefore bounded below by
\[
T^{-s}
\geq
(1-c_{\rho,\delta})^{-s}
B^sD^{-s}(1+o(1)).
\]
Since \(c_{\rho,\delta}>0\), the leading prefactor of \(B^sD^{-s}\) is strictly
larger than \(1\). Therefore every terminal-tie schedule has objective strictly larger,
at the leading constant level, than
\[
B^sD^{-s}+o(D^{-s}),
\]
which is achieved by the strict-switching family. Thus terminal-tie schedules
are not asymptotically optimal.

\paragraph{Step 5: conclusion for the signal-limited regime.}
By Lemma~\ref{lem:canonical-joint-schedule}, every optimal schedule may be
chosen in either the strict-switching case or the terminal-tie case. Step 4
excludes the terminal-tie case. Hence the global asymptotic optimum is given by
the strict-switching optimum computed in Step 3.

Therefore
\[
T_1^\star
=
\frac{(1-\rho)D}{B},
\]
\[
T_4^\star
=
\left(
\frac{\eta\sigma_1^2}{sB^{s+2}}
\right)^{1/\gamma}
D^{\frac{s+1}{\gamma}}
(1+o(1)),
\]
and
\[
T_3^\star
=
\frac{\rho D}{B}
-
\frac{(T_4^\star+1)-(T_4^\star+1)^{1-\delta}}{\delta}
+
o(T_4^\star).
\]
Moreover,
\[
T^\star
=
\frac{D}{B}
-
\left(\frac1\delta-1\right)T_4^\star
+
O\bigl((T_4^\star)^{1-\delta}\bigr),
\]
or equivalently,
\[
T^\star
=
\frac{D}{B}
-
(2\beta-1)T_4^\star
+
O\left((T_4^\star)^{1-\frac1{2\beta}}\right).
\]

Finally, every feasible schedule satisfies
\[
\mathcal{E}[T,b,p]\geq T^{-s}\geq B^sD^{-s},
\]
because \(T\leq D/B\). Together with the strict-switching upper bound, this
gives
\[
\mathcal{E}^\star(D)
=
B^sD^{-s}
+
O\left(
D^{-\frac{(s+1)(\gamma-1)}{\gamma}}
\right),
\]
which, expanded in \(\beta\), reads
\[
\frac{(s+1)(\gamma-1)}{\gamma}
=
\frac{(s+1)(1-1/\beta)}{2-1/\beta}.
\]
This is the claimed signal-limited regime expansion. The proof is complete.
\end{proof}

\begin{remark}[Critical regime]
\label{rem:critical-joint-schedule}
The proof deliberately excludes the critical case
\[
s=1-\frac1\beta .
\]
At criticality, the subcritical balance gives \(T_4\asympconst c\,D\) for
some constant \(c>0\), so the terminal boundary layer is no longer
sublinear. The expansion used to
derive the signal-limited schedule therefore no longer applies.

The argument forcing the \(p=0\) unconstrained phase to vanish uses
\[
s+1<2-\frac1\beta .
\]
At criticality this becomes equality, so the bias loss from shortening the
training horizon and the variance gain from extra batching are of the same
order. Similarly, the argument excluding the terminal-tie form is no longer
decisive, because different canonical forms may differ only at the level of
leading constants.

Nevertheless, Lemma~\ref{lem:canonical-joint-schedule} still applies. Thus,
if an optimal solution exists, it may be chosen in one of the canonical forms
described there. Determining which canonical form is selected, and which phase
proportions vanish, requires solving the corresponding finite-dimensional
constant-level problem.
\end{remark}

\section{Experiments Details}

\subsection{Experimental details for Theorem~\ref{thm:quality-fsl} verification}
\label{app:fsl-experiment}

\begin{figure*}[ht]
    \centering
    \includegraphics[width=\textwidth]{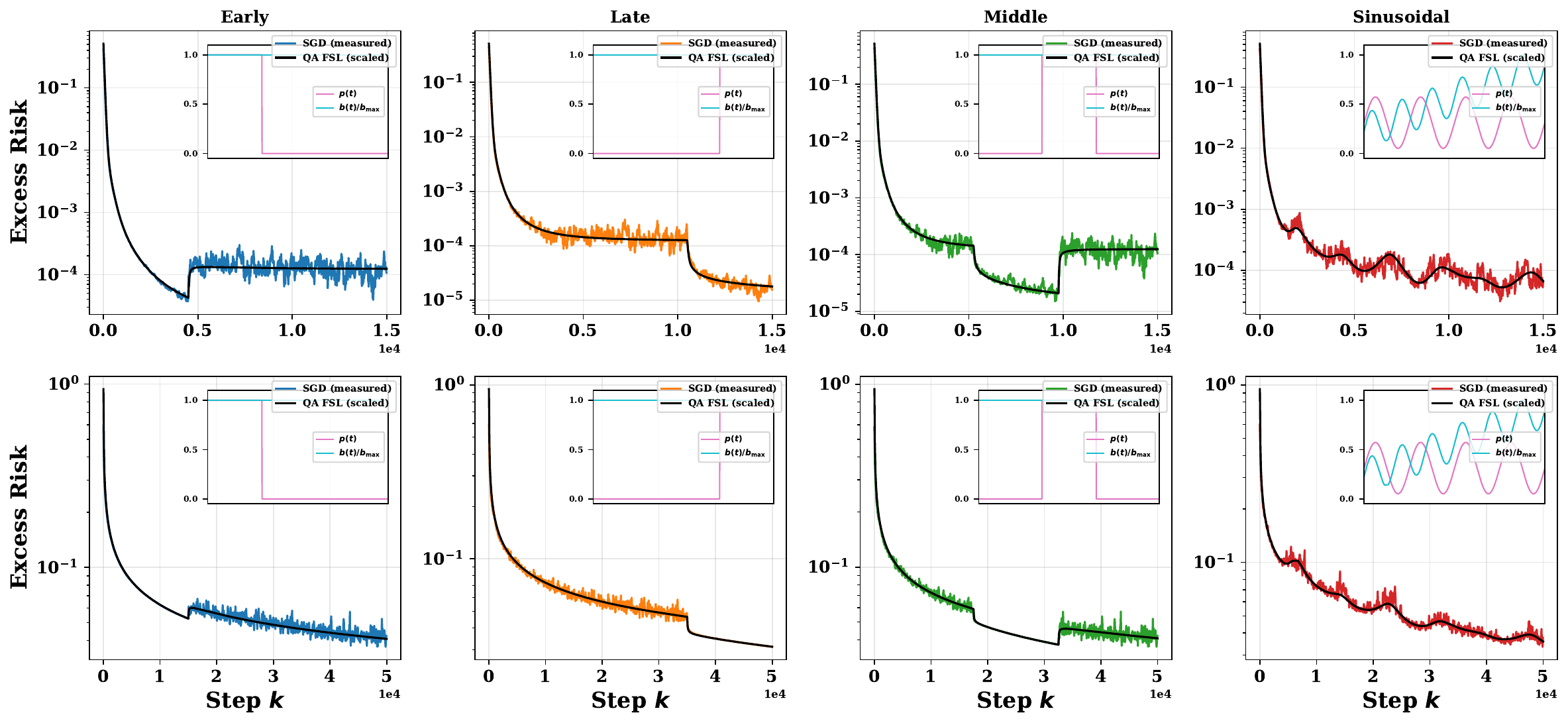}
    \caption{Verification of Theorem~\ref{thm:quality-fsl}.
    Top: noise-limited ($s=2$). Bottom: signal-limited ($s=0.4$).
    Colored curves: measured SGD risk (averaged over 10 seeds). Black curves: quality-aware FSL theory (scaled).}
    \label{fig:fsl-verification}
\end{figure*}

We verify Theorem~\ref{thm:quality-fsl} by running one-pass SGD in the
diagonal feature space, where the assumptions hold exactly. We test both
the noise-limited regime and the signal-limited regime with four schedules satisfying the same total high-quality data budget
\(\sum_{k=1}^K p_k B_k = \rho \sum_{k=1}^K B_k\),
where \(\rho\in(0,1)\) is the fraction of high-quality data:
\begin{itemize}[nosep]
    \item \textbf{Early / Late / Middle}: bang-bang quality placement
    (\(p=1\) on the first, last, or central \(\rho K\) steps) with
    constant batch.
    \item \textbf{Sinusoidal}: jointly varying quality
    \(p(t)\propto\rho+0.25\sin(8\pi t/T)\) and oscillating-upward batch
    \(b(t)\propto 1+3t/T+0.8\sin(12\pi t/T)\), both rescaled to satisfy
    the budget constraints.
\end{itemize}
Figure~\ref{fig:fsl-verification} shows the results. In each panel, the
measured excess risk (colored, averaged over 10 seeds) closely tracks the scaled quality-aware FSL
prediction (black) throughout training. Inset plots display the schedules
\(p(t)\) and \(b(t)\). The theory accurately tracks all
schedule-dependent differences---including the complex Sinusoidal case
where both controls vary simultaneously---confirming that the forgetting
kernel \(K(t-\tau)\) correctly weights \(\sigma_{\eff}^2(\tau)/b(\tau)\) at each \(\tau\).

\paragraph{Parameter settings.}
Throughout, we fix dimension \(N=500\), capacity exponent \(\beta=2\),
learning rate \(\eta=0.01\), and high-quality data fraction \(\rho=0.3\),
with critical exponent \(s_{\mathrm{crit}}=1-1/\beta=0.5\).

\emph{Noise-limited setting} (\(s=2\)): total iterations \(K=15{,}000\)
(\(T=150\)), reference batch size \(B=32\), noise variances
\(\sigma_{\good}^2=0.1\), \(\sigma_{\bad}^2=1.0\).

\emph{Signal-limited setting} (\(s=0.4\)): total iterations \(K=50{,}000\)
(\(T=500\)), reference batch size \(B=4\), noise variances
\(\sigma_{\good}^2=0.1\), \(\sigma_{\bad}^2=10.0\).  In the signal-limited
regime, the bias term \(t^{-s}\) decays slowly and dominates unless the
noise strength is sufficiently large. We therefore use a smaller batch size
and a larger bad-noise variance to ensure that the noise integral is comparable to
the bias \(T^{-s}\), so that the effect of different quality schedules is
visible.

\paragraph{Schedule construction.}
The Early, Late, and Middle schedules use a constant batch size \(b(t)=B\)
and bang-bang quality schedules that place \(p(t)=1\) on the first, last, or
central \(\rho K\) steps respectively.

The Sinusoidal schedule uses a joint non-constant design:
\[
\begin{aligned}
    &b_{\mathrm{raw}}(t)
    =
    1 + 3\frac{t}{T} + 0.8\sin\!\Bigl(\frac{2\pi\cdot 6\, t}{T}\Bigr),
    \\
    &p_{\mathrm{raw}}(t)
    =
    \rho + 0.25\sin\!\Bigl(\frac{2\pi\cdot 4\, t}{T}\Bigr).
\end{aligned}
\]
The batch schedule is rescaled so that
\(\sum_{k=1}^K B_k = BK\) (matching the total data budget of the
constant-batch schedules), and the quality schedule is rescaled so that
\(\sum_{k=1}^K p_k B_k = \rho\sum_{k=1}^K B_k\), with values clipped to
\([0,1]\). This produces an oscillating-upward batch curve (6 cycles) and a
sinusoidal quality curve (4 cycles) that jointly satisfy the budget
constraints.

\paragraph{Theoretical prediction curve.}
For the theoretical prediction, we compute the kernel
\(K(t)=\sum_{j=1}^N\lambda_j^2 e^{-2\lambda_j t}\) and the bias term
\(e(t)=\sum_{j=1}^N\lambda_j|\theta_j^\star|^2 e^{-2\lambda_j t}\) exactly
in the finite-dimensional setting (rather than using the asymptotic
power-law approximation). The noise integral
\(\int_0^t K(t-\tau)\sigma_{\eff}^2(\tau)/b(\tau)\dd\tau\) is discretized
with step size \(\eta\). Since \(b(\tau)\) and \(p(\tau)\) now vary across
schedules, the integrand \(\sigma_{\eff}^2(\tau)/b(\tau)\) is evaluated
pointwise at each discrete time step.

\paragraph{Fitting procedure.}
Since the theorem establishes an asymptotic
equivalence (\(\asympconst\)) rather than an exact equality, we fit a
single global multiplicative constant (via the median of
measured/predicted ratios) to align the theory curve with the measurement.
All experimental curves are averaged over 10 independent random seeds.

\subsection{Experimental details for Corollary~\ref{cor:quality-shape} verification}
\label{app:corollary-experiment}

\begin{figure}[ht]
    \centering
    \includegraphics[width=0.48\textwidth]{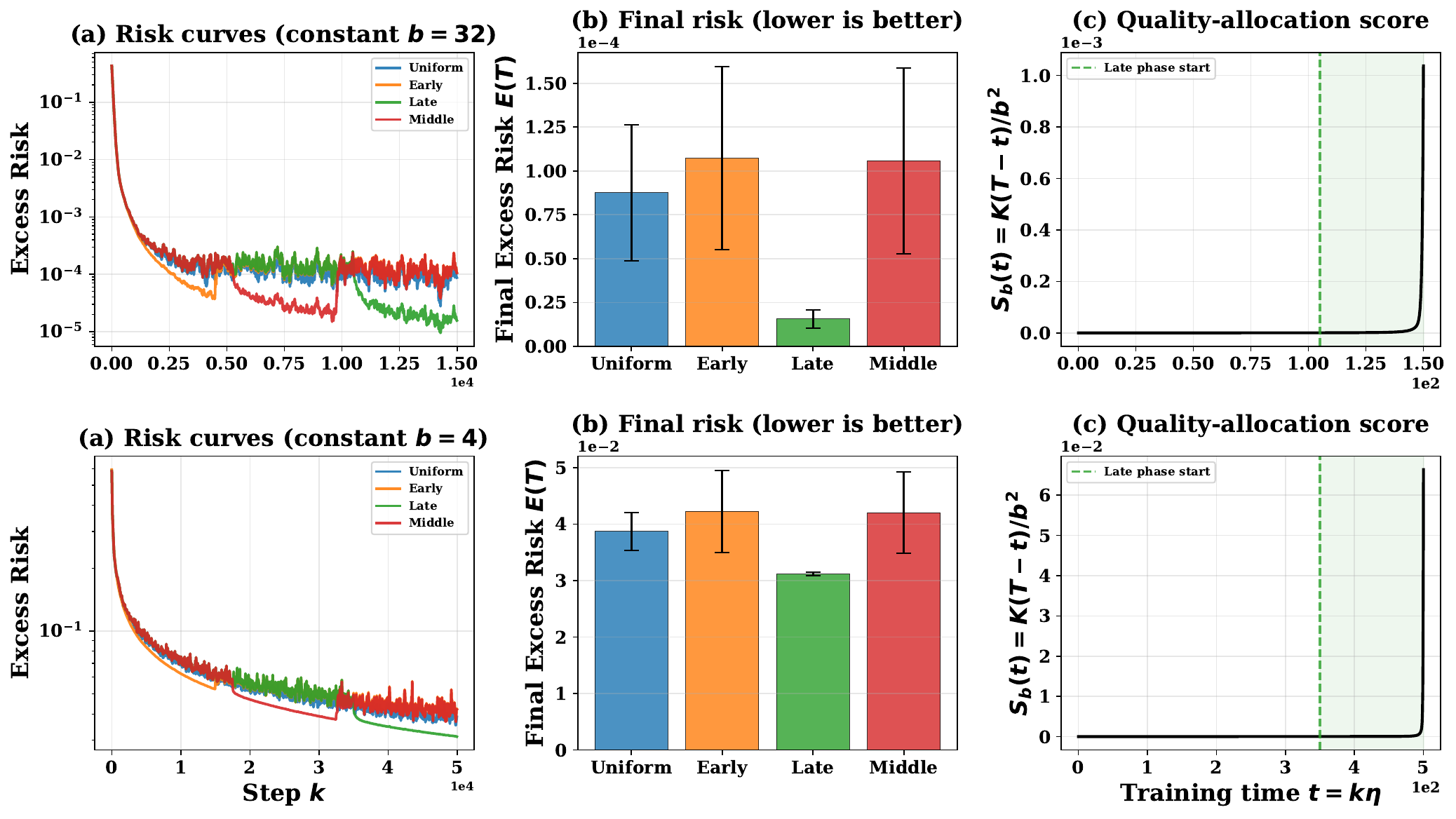}
    \caption{Corollary~\ref{cor:quality-shape}(i): constant batch.
    Top: noise-limited (\(s=2\)); bottom: signal-limited (\(s=0.4\)).
    (a)~Risk curves: Late achieves the lowest final risk.
    (b)~Final excess risk bar chart.
    (c)~Score \(S_b(t)\propto K(T-t)\) is monotonically increasing;
    shaded region marks where Late places high-quality data.}
    \label{fig:constant-batch}
\end{figure}

\paragraph{Constant-batch experiment (Corollary~\ref{cor:quality-shape}(i)).}
We use the same settings as in Appendix~\ref{app:fsl-experiment}:
noise-limited (\(s=2\), \(B=32\), \(\sigma_{\bad}^2=1\), \(K=15{,}000\)) and
signal-limited (\(s=0.4\), \(B=4\), \(\sigma_{\bad}^2=10\), \(K=50{,}000\)),
with \(N=500\), \(\beta=2\), \(\eta=0.01\), \(\sigma_{\good}^2=0.1\), and
\(\rho=0.3\) throughout. All results are averaged over 10 seeds.

When \(b(t)\) is constant, the score reduces to
\(S_b(t)\propto K(T-t)\), which is monotonically increasing in \(t\):
noise injected late is remembered most strongly, so the optimal quality
schedule concentrates high-quality data at the end of training.
Figure~\ref{fig:constant-batch} confirms the prediction in both the noise-limited
and signal-limited regimes. The Late schedule achieves the lowest final risk,
with a consistent ranking Late \(<\) Uniform \(<\) Middle \(<\) Early.
Panel~(c) shows that \(S_b(t)\) is monotonically increasing, and the Late
schedule places its high-quality data precisely in the region where
\(S_b(t)\) is largest.

\begin{figure}[ht]
    \centering
    \includegraphics[width=0.48\textwidth]{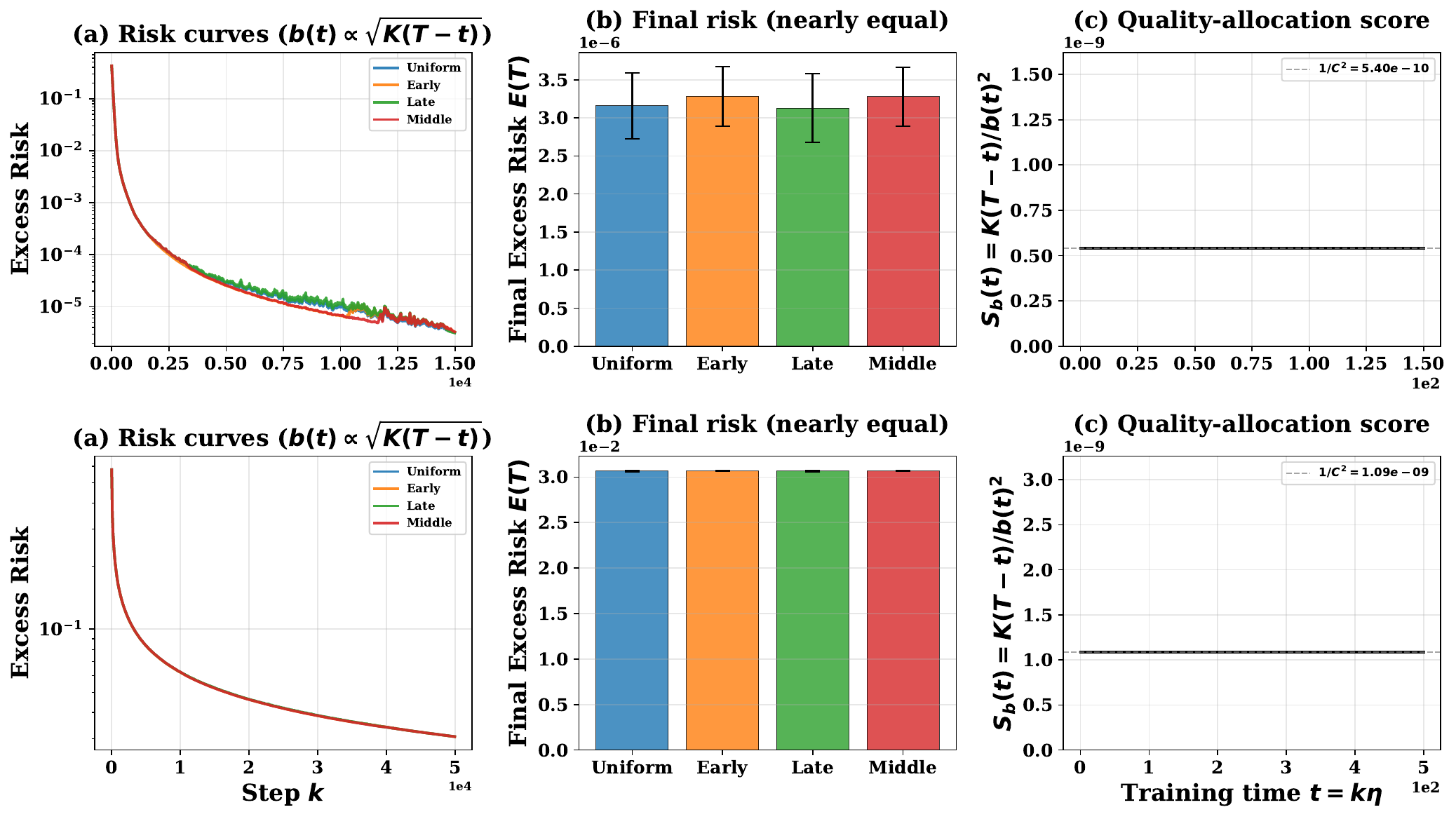}
    \caption{Corollary~\ref{cor:quality-shape}(ii): batch
    \(b(t)=C\sqrt{K(T-t)}\).
    Top: noise-limited (\(s=2\)); bottom: signal-limited (\(s=0.4\)).
    (a)~Risk curves for the four schedules are nearly identical.
    (b)~Final excess risks are equal within standard error.
    (c)~Score \(S_b(t)\approx 1/C^2\) is approximately constant,
    confirming degeneracy.}
    \label{fig:sqrt-K-batch}
\end{figure}

\paragraph{Optimal-batch experiment (Corollary~\ref{cor:quality-shape}(ii)).}
We use the same eigenstructure, learning rate, and quality parameters as
above, with total data budget \(D=200{,}000\). For the noise-limited setting
(\(s=2\), \(\sigma_{\bad}^2=1\), \(K=15{,}000\)) and the signal-limited setting
(\(s=0.4\), \(\sigma_{\bad}^2=10\), \(K=50{,}000\)), we replace the
constant batch with the schedule
\(B_k=\max\{C\sqrt{K(T-k\eta)},\,1\}\), where
\(C=D/\int_0^T\!\sqrt{K(T-t)}\dd t\).

Because the batch size now varies across time, a fixed number of
high-quality \emph{steps} would no longer guarantee equal high-quality
\emph{data} budgets across schedules. We therefore construct budget-aware
quality schedules: for the Early schedule, we set \(p_k=1\) on the first
\(n_{\mathrm{early}}\) steps such that
\(\sum_{k=1}^{n_{\mathrm{early}}} B_k\eta=\rho D\); for the Late schedule,
we set \(p_k=1\) on the last \(n_{\mathrm{late}}\) steps satisfying the
same budget; and similarly for Middle. This ensures that all schedules
consume exactly the same total number of high-quality samples, so any
performance difference can be attributed solely to the \emph{timing} of
quality placement.

When \(b(t)=C\sqrt{K(T-t)}\) (the rate-optimal schedule
of \citet{wang2025fastcatchup}), the score becomes
\(S_b(t)=1/C^2\), a constant independent of \(t\), so quality placement is
degenerate. Figure~\ref{fig:sqrt-K-batch} confirms this: unlike the
constant-batch case, the four quality schedules produce nearly
indistinguishable risk curves in both the noise-limited and signal-limited settings,
with final excess risks equal within standard error. Panel~(c) shows that
\(S_b(t)\) is approximately flat, confirming degeneracy.

\subsection{Experimental details for Theorem~\ref{thm:joint-optimal} verification}
\label{app:joint-experiment}

\paragraph{Parameter settings.}
We fix dimension \(N=500\), high-quality fraction \(\rho=0.3\), and
\(\sigma_{\good}^2/\sigma_{\bad}^2<1\) throughout. All results are averaged
over 10 independent random seeds.

\emph{Noise-limited regime} (\(s=2\), \(\beta=2\)): learning rate \(\eta=0.01\),
noise variances \(\sigma_{\good}^2=0.1\), \(\sigma_{\bad}^2=1.0\)
(variance ratio \(r=0.1\)), total data budget \(D=200{,}000\).

\emph{Signal-limited regime} (\(s=0.5\), \(\beta=3\)): learning rate
\(\eta=0.05\), noise variances \(\sigma_{\good}^2=0.25\),
\(\sigma_{\bad}^2=1.0\) (variance ratio \(r=0.25\)), total data budget
\(D=10{,}000\), minimum batch size \(B_{\min}=4\).

\paragraph{Strategy construction.}
All four strategies share the same total data budget \(D\) and high-quality
data budget \(\rho D\).

\begin{enumerate}[nosep]
\item \textbf{Constant $b$ + uniform $p$}: \(b(t)=D/T\), \(p(t)=\rho\).
In the noise-limited regime, the training horizon is set to the joint optimal
\(T^\star\) from Strategy~4.
In the signal-limited regime, the constant batch is \(b=B_{\min}\), which forces
\(T=D/B_{\min}\) (the maximum feasible horizon).

\item \textbf{Constant $b$ + late $p$}: same batch size and horizon as
Strategy~1; quality is bang-bang with \(p=1\) on the last \(\rho T/\eta\)
steps and \(p=0\) otherwise, satisfying the high-quality budget constraint.

\item \textbf{Uniform $p$ + optimal $b$}: \(p(t)=\rho\) throughout;
batch size \(b(t)=\max(C\sqrt{K(T\!-\!t)},\,B_{\min})\) with the constant
\(C\) determined by bisection so that \(\int_0^T b(t)\dd t=D\). The
training horizon \(T^\star\) is independently optimized by minimizing the
FSL objective
\(T^{-s}+\eta\int_0^T K(T\!-\!t)\sigma_{\mathrm{eff}}^2/b(t)\,\dd t\)
over \(T\) using bounded scalar minimization.

\item \textbf{Joint optimal}: implements Theorem~\ref{thm:joint-optimal}
directly. In the noise-limited regime (Part~I), the training horizon \(T^\star\)
is optimized over \(T\); the batch schedule is
\(b(t)=C\sqrt{K(T^\star\!-\!t)}\) on low-quality steps and
\(b(t)=rC\sqrt{K(T^\star\!-\!t)}\) on high-quality steps, with quality
placement chosen as late (one valid choice among the degenerate family).
In the signal-limited regime (Part~II), the three-phase structure is
constructed: Phase~1 (\(p=0\), \(b=B_{\min}\)) exhausts all low-quality
data over \(T_1^\star=(1-\rho)D/B_{\min}\); Phase~3 (\(p=1\),
ramp-up) uses
\(b(t)=B_{\min}\bigl((T^\star\!-\!t\!+\!1)/(T_4^\star\!+\!1)\bigr)^{1/(2\beta)-1}\);
Phase~2 (\(p=1\), \(b=B_{\min}\)) occupies the remainder
\(T_3^\star=T^\star\!-\!T_1^\star\!-\!T_4^\star\).
The ramp-up duration \(T_4^\star\) is optimized numerically by minimizing
the quality-aware FSL objective over \(T_4\), which implicitly determines the total
horizon \(T^\star=T_1^\star+T_3^\star+T_4^\star\).
\end{enumerate}

\section{LLM Experiment Details}
\label{app:llm-details}

\subsection{Model configurations}
\label{app:model-configurations}

\paragraph{15B-total/2.4B-active MoE.}
The proprietary-data experiments and the public-data MoE validation use the same shortcut-connected MoE architecture of \citet{cai2025shortcutconnected}. The model has 12 Transformer layers, a hidden dimension of 2,304, a shared FFN dimension of 4,608, 24 attention heads, and 8 KV heads for grouped-query attention. Each MoE layer contains 112 experts with an expert FFN dimension of 1,536, and the router activates the top 8 experts per token.

\paragraph{600M dense model.}
The public-data dense model has 24 Transformer layers, a hidden dimension of 1,536, a shared FFN dimension of 4,096, 12 attention heads, and 4 KV heads. It directly uses the standard HuggingFace LLaMA forward implementation~\citep{touvron2023llama2}, with RMSNorm, SwiGLU, grouped-query attention, RoPE base $10^6$, bias-free linear layers, untied input/output embeddings, and zero attention and hidden dropout.

\subsection{Data composition}
\label{app:data-composition}

\subsubsection{Proprietary-data experiments}
\label{app:proprietary-data-composition}

\paragraph{Pretraining corpus.} The proprietary-data setting uses a general pretraining corpus covering diverse topics, domains, and languages. Similar to standard web corpora, this data is inherently noisy with respect to specific downstream task distributions, corresponding to high $\sigma_{\bad}^2$ in our theoretical framework. All proprietary-data midtraining runs fork from one checkpoint pretrained on 1.2T tokens from this corpus.

\paragraph{Midtraining corpus.} The proprietary midtraining corpus is specifically curated for reasoning and coding enhancement, designed to align closely with complex downstream evaluation benchmarks. It consists of:
\begin{itemize}[nosep]
\item \textbf{Reasoning enhancement data}: high-quality question-answering pairs, chain-of-thought annotations, and step-by-step mathematical proof trajectories to bolster logical deduction and problem-solving capabilities.
\item \textbf{Coding enhancement data}: carefully filtered proprietary code datasets and high-quality repositories with documentation, covering algorithmic problem-solving across multiple programming languages.
\end{itemize}
This focused corpus exhibits significantly lower noise variance $\sigma_{\good}^2$ with respect to downstream reasoning and coding benchmarks, as it is tightly aligned in both format and distribution.

\subsubsection{Public-data experiments}
\label{app:public-data-composition}

\paragraph{Pretraining corpus.} Both the 600M dense model and the 15B-total/2.4B-active MoE model are pretrained on 300B FineWeb-Edu tokens~\citep{penedo2024fineweb}. Within each architecture, every midtraining schedule starts from the same final pretrained checkpoint. Holding the public pretraining corpus and token budget fixed across architectures isolates architectural transfer from differences in the data pipeline.

\paragraph{Midtraining corpus.} Both models are midtrained for 30B tokens on the same 1:1 public mixture. The code half contains 15B tokens from SwallowCode~\citep{fujii2026rewriting}, and the mathematics half contains 15B tokens from the \texttt{4plus} subset of Nemotron-CC-Math~\citep{mahabadi2026nemotronccmath}. Every schedule consumes the same mixture, data order, and total token budget within each architecture.

\subsection{Training and schedule configurations}

The proprietary-data and public-data experiments use different hardware configurations, with the public-data configurations incurring greater communication overhead; consequently, smaller batches attain lower MFU relative to the corresponding maximum-batch runs. Because absolute utilization measurements and accelerator identifiers are subject to infrastructure confidentiality constraints, we report only within-setting MFU ratios as reference values rather than as platform-independent efficiency measurements. MFU depends jointly on the hardware, network interconnect, batch size, model architecture, training framework, and parallelism configuration.

\subsubsection{Proprietary-data experiments}

The proprietary-data runs start from the same MoE checkpoint pretrained on 1.2T tokens of general web data. All schedules use a 108B-token midtraining budget, sequence length 8,192, BF16 precision, and AdamW~\citep{loshchilov2018decoupled} with $\beta_1{=}0.9$, $\beta_2{=}0.95$, $\epsilon{=}10^{-12}$, weight decay $0.1$, peak learning rate $6.51\times10^{-4}$, and minimum learning rate $4.2\times10^{-5}$. Steps are reported in units of 1k.

\paragraph{Batch-size schedule comparison (the first two sets of Table~\ref{tab:main_results}).} We compare three schedules on both the pretraining and midtraining corpora, and add Constant~2k as a control in the midtraining setting. All runs use identical token budgets:
\begin{itemize}[nosep]
\item \textbf{Constant 4k}: global batch size held at 4096 throughout the 108B midtraining span ($\approx 3.2\text{k}$ steps).
\item \textbf{Ramp-up 4k$\to$8k}: batch size steps through $\{4096,5120,6144,7168\}$ over the first 86B tokens (Megatron's equal-sample ramp with increment 1024), then stays at 8192 for the remaining 22B ($\approx 2.3\text{k}$ steps).
\item \textbf{Drop 1k$\to$8k}: batch size drops to 1024 at the start of midtraining, steps through $\{1024, 2048, \dots, 7168\}$ over 94B tokens, then stays at 8192 for the remaining 14B ($\approx 4.3\text{k}$ steps).
\item \textbf{Constant 2k (midtraining only)}: global batch size held at 2048 throughout the 108B-token span ($\approx 6.4\text{k}$ steps).
\end{itemize}
All schedules consume the same total tokens. Constant~2k has the largest total number of optimizer updates, while Drop concentrates additional updates in the early phase when the model first encounters the midtraining distribution shift. This distinction directly tests whether the gain comes from the timing of small-batch training rather than from more updates alone.

\paragraph{Stable-phase ratio experiment (Figure~\ref{fig:stable_ratio}).} To finely sweep the duration of the low-batch stable phase, we instantiate the DSR recipe with batch-size bounds $[512, 4096]$ and vary the stable-phase ratio:
\begin{itemize}[nosep]
\item \textbf{Drop}: at the start of midtraining, the batch size drops from the pretraining value of 4096 to 512.
\item \textbf{Stable}: batch size held at 512 with learning rate fixed at $\eta$.
\item \textbf{Rampup}: batch size steps through $\{512,1024,\dots,3584\}$ in equal-sample increments of 512 with learning rate fixed at $\eta$, then held at 4096 with learning rate linearly decayed from $\eta$ to $4.2\times10^{-5}$ (equivalent to continuing the batch-size ramp-up by~\citet{smith2018dont}).
\end{itemize}
We fix the ramp-up and decay phases to receive equal token shares of the midtraining budget after the stable phase, and sweep the stable-phase ratio from 0\% to 62.14\% (corresponding to stable-phase entry points of $\{0, 3200, 6400, 9600, 12800, 16000\}$ steps in the parent run).

\paragraph{Schedule comparison (the last set of Table~\ref{tab:main_results}).} Three schedules are compared:
\begin{itemize}[nosep]
\item \textbf{Cosine-decay}: Constant batch size 4096, learning rate follows a cosine decay schedule from $6.51\times10^{-4}$ to $4.2\times10^{-5}$.
\item \textbf{WSD}~\citep{hu2024minicpm}: Constant batch size 4096, learning rate held at $6.51\times10^{-4}$ for the first 72B tokens, then linearly decayed to $4.2\times10^{-5}$ over the final 36B tokens.
\item \textbf{Drop-Stable-Rampup}: Our proposed schedule with batch size [512, 4096], 24.9\% stable ratio, followed by linear ramp-up and learning rate decay.
\end{itemize}
All three schedules consume the same 108B token budget.

\paragraph{Relative efficiency and wall-clock cost.}
The proprietary-data MoE experiments are conducted on 384 accelerators using the Megatron framework~\citep{shoeybi2019megatron}, configured with pipeline parallelism $pp=6$ and tensor parallelism $tp=1$. Batch size 512 achieves $0.78\times$ the MFU at batch size 4,096. Accounting for the measured efficiency of each batch-size bucket and its token allocation, DSR requires $1.10\times$ the wall-clock time of a constant-4,096 run under the same 108B-token budget and hardware allocation. This overhead reflects reduced hardware utilization rather than additional algorithmic FLOPs.

\subsubsection{Public-data experiments}
\label{app:public-validation-details}

We evaluate the 600M dense model and the 15B-total/2.4B-active MoE model described in Appendix~\ref{app:model-configurations} using the shared public corpora detailed in Appendix~\ref{app:public-data-composition}.

\paragraph{Training configurations.}
Both architectures use sequence length 8,192, BF16 precision, micro-batch size 1, random seed 48, gradient clipping at 1.0, and AdamW with $\beta_1{=}0.9$, $\beta_2{=}0.95$, and weight decay 0.1. The dense model is trained on 128 accelerators and uses $\epsilon{=}10^{-8}$, peak/minimum learning rates $3.0\times10^{-4}$/$3.0\times10^{-5}$, and no learning-rate warmup. The MoE model uses $\epsilon{=}10^{-12}$, peak/minimum learning rates $6.51\times10^{-4}$/$4.2\times10^{-5}$, and a 10-step warmup shared by all MoE baselines. For the MoE runs, the load-balancing and device-balancing coefficients are both 0.001, the router z-loss coefficient is 0.1, tensor parallelism is 1, and pipeline parallelism is 6. Within each architecture, all schedules use identical data and data order, tokenizer, optimizer, random seed, evaluation checkpoints, and evaluation protocol.

\paragraph{Schedule comparison.}
For both models, we compare Cosine, WSD-Early, WSD-Late, and Drop-Stable-Rampup under the same 30B-token budget. Cosine decays throughout all 30B tokens. WSD-Early also begins learning-rate decay at 0B, whereas WSD-Late holds the peak learning rate for 20B tokens and decays over the final 10B. DSR divides midtraining into three equal 10B-token phases: a minimum-batch stable phase, a batch-size ramp, and a maximum-batch phase with terminal learning-rate decay. The dense model holds the global batch size at 256 for the first 10B tokens, ramps it from 256 to 1,024 over the next 10B, and holds it at 1,024 during the final 10B-token learning-rate decay. The MoE model analogously uses batch sizes 1,024 and 4,096. Only the batch-size and learning-rate schedules differ among methods within an architecture.

\paragraph{Relative efficiency and wall-clock cost.}
For each public-data model, DSR and its constant-batch reference use the same hardware configuration and allocation. For the public-data MoE model, batch size 1,024 achieves $0.81\times$ the MFU at batch size 4,096. Accounting for the measured efficiency of each batch-size bucket and its token allocation, DSR requires $1.11\times$ the wall-clock time of a constant-4,096 run under the same 30B-token budget. The 600M dense model is trained on 128 accelerators; batch size 256 achieves $0.67\times$ the MFU at batch size 1,024. Its DSR run similarly requires $1.23\times$ the wall-clock time of a constant-1,024 run under the same token budget. These ratios are provided only as within-setting reference values and are not compared across hardware configurations.

\subsection{Evaluation benchmarks}

\subsubsection{Proprietary-data experiments}

We evaluate on 14 benchmarks spanning four categories:
\begin{itemize}[nosep]
\item \textbf{Knowledge}: MMLU~\citep{hendrycks2021measuring}, MMLU-Pro~\citep{wang2024mmlu}, CMMLU~\citep{li2024cmmlu}, C-Eval~\citep{huang2023ceval}.
\item \textbf{Reasoning}: ARC-Challenge~\citep{clark2018think}, BIG-Bench Hard (BBH)~\citep{suzgun2023challenging}.
\item \textbf{Math}: GSM8K~\citep{cobbe2021training}, MATH~\citep{hendrycks2021math}.
\item \textbf{Code}: HumanEval+~(HE+)~\citep{liu2023your}, BigCodeBench~(BCB)~\citep{zhuo2025bigcodebench}, LiveCodeBench~(LCB)~\citep{jain2025livecodebench}, CRUX-I, CRUX-O~\citep{gu2024cruxeval}, MultiPL-E~\citep{cassano2023multiple}.
\end{itemize}
All benchmarks are evaluated in zero-shot or few-shot settings following standard protocols. For MMLU, CMMLU, and C-Eval, we report 5-shot accuracy. For GSM8K and MATH, we use 8-shot chain-of-thought prompting. For code benchmarks, we use pass@1 with greedy decoding.

\subsubsection{Public-data experiments}

We evaluate STEM knowledge on MMLU-STEM and MMLU-Pro-STEM; mathematical reasoning on GSM8K and MATH with chain-of-thought prompting; code generation on HumanEval+, LiveCodeBench, MBPP+, BigCodeBench, and MultiPL-E; and general reasoning on BBH and ARC-Challenge. No reportable LiveCodeBench score was obtained for the dense model, so this benchmark is shown with dashes and excluded from the dense-model mean. HumanEval+ and MBPP+ use a 3-shot protocol. The prompts, decoding settings, and evaluation implementation are held fixed across schedules within each architecture.

% Signal-Limited vs. Noise-Limited Benchmark Analysis moved to main text (sections/analysis.tex)

\section{Heterogeneous training dynamics across benchmarks}
\label{sec:regime-analysis}

Our theory predicts that the optimal schedule depends on whether training is signal-limited or noise-limited, determined by the exponents $s$ and $\beta$ in the linear regression model. In practice, we do not have direct access to these parameters for LLM training, nor do we know a priori whether different tasks exhibit uniform or divergent dynamics. In this section, we analyze the accuracy trajectories of 14 benchmarks and report three findings.

First, different benchmarks exhibit clearly divergent dynamics during the stable phase. We define the \emph{early-phase gap} as the average accuracy difference between WSD and Drop-Stable-Rampup over all checkpoints within the stable phase (Figure~\ref{fig:app-all-curves}). Since the stable phase uses a much smaller batch size (512 vs.\ 4096), all benchmarks suffer from increased gradient noise and DSR generally trails WSD during this phase. However, the magnitude of this deficit varies substantially across benchmarks: some show a large early-phase gap (DSR falls far behind WSD), while others show a negligible gap (DSR remains close to WSD despite the small batch size).

Second, consistent with our theoretical prediction, benchmarks with smaller early-phase gaps achieve larger final gains from Drop-Stable-Rampup. A small early gap indicates that the increased noise from reducing the batch size is largely offset by the signal accumulated through additional gradient steps---this signal remains latent during the stable phase but manifests after the ramp-up. Conversely, a large early gap indicates that the noise cost of small batch size dominates, and the ramp-up phase can only partially recover the deficit.

Third, from the perspective of task content, benchmarks with small early-phase gaps correspond to mathematical reasoning (GSM8K, MATH) and algorithmic code generation (HumanEval+, MultiPL-E)---capabilities that require compositional, multi-step inference. A notable exception is BigCodeBench: despite being a code benchmark, it exhibits a large early-phase gap. BigCodeBench emphasizes diverse library/function calls and complex instructions; in our trajectories, its behavior suggests that the tested capabilities may be more sensitive to API/library knowledge than to the compositional signal accumulation observed in HumanEval+ and MultiPL-E.

\begin{figure*}[ht]
    \centering
    \includegraphics[width=\linewidth]{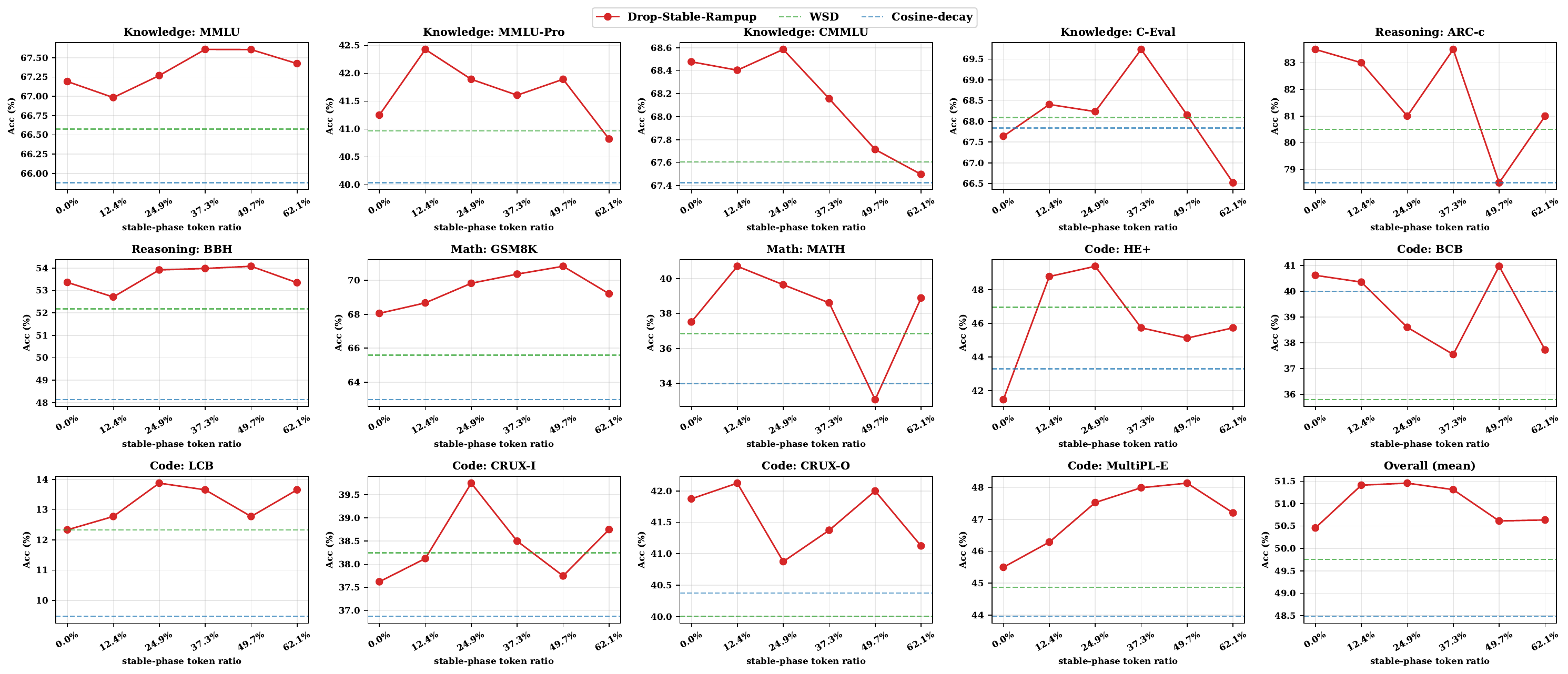}
    \caption{Final checkpoint evaluation results across different schedule configurations, demonstrating the robustness of the Drop-Stable-Rampup strategy.}
    \label{fig:app-rampup-ckpt}
\end{figure*}

\begin{figure*}[ht]
    \centering
    \includegraphics[width=\linewidth]{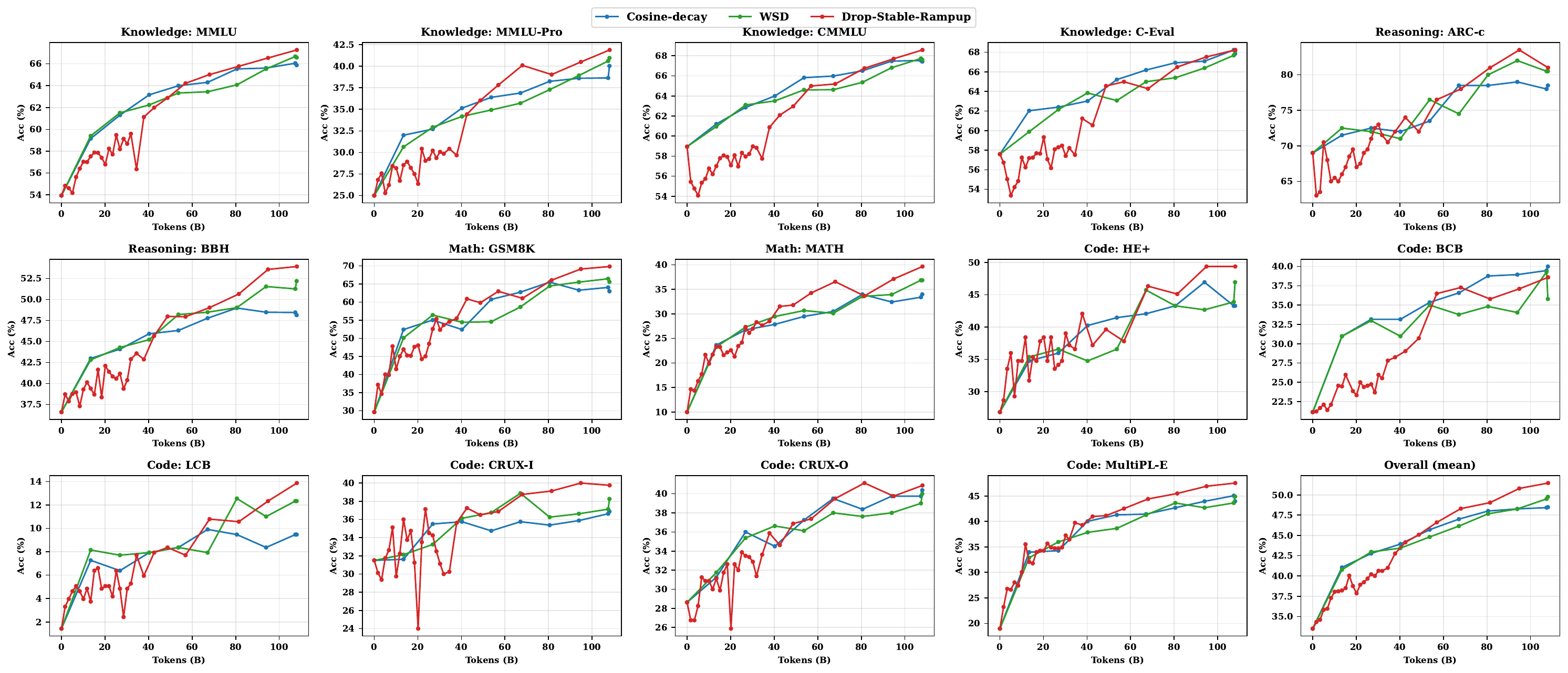}
    \caption{Per-benchmark accuracy curves for Cosine-decay, WSD, and Drop-Stable-Rampup across all 14 evaluation benchmarks.}
    \label{fig:app-all-curves}
\end{figure*}

\paragraph{Benchmarks with small early-phase gaps.}
These benchmarks show that DSR remains close to WSD during the stable phase despite using a much smaller batch size (Figure~\ref{fig:app-all-curves}). The noise penalty of small batch size is largely absorbed, and DSR overtakes WSD during the ramp-up/decay stage, ultimately achieving the largest final gains. This is consistent with the interpretation that extra gradient steps at low batch size accumulate latent capacity that only manifests once noise is suppressed during ramp-up.

\begin{itemize}[nosep]
\item \textbf{GSM8K} (+4.23): Tests multi-step arithmetic reasoning requiring chaining 2--8 sequential operations. The training curve shows near-zero early gap, with a sharp separation emerging only during the ramp-up/decay phase.
\item \textbf{MATH} (+2.80): Probes competition-level mathematical problem-solving across prealgebra, algebra, number theory, counting and probability, geometry, intermediate algebra, and precalculus. The curve shows minimal early gap, with gains accumulating gradually during ramp-up.
\item \textbf{HumanEval+} (+2.44): Evaluates Python code generation with rigorous test cases. Virtually no early-phase gap; the performance separation emerges only in the later ramp-up/decay stage.
\item \textbf{MultiPL-E} (+2.66): Assesses code generation across multiple programming languages. Small early gap combined with large final gain.
\end{itemize}

\paragraph{Benchmarks with large early-phase gaps.}
These benchmarks show that DSR falls substantially behind WSD during the stable phase. The ramp-up phase partially closes this deficit, but the final gains are smaller than the small-gap group, and in one case DSR underperforms a baseline.

\begin{itemize}[nosep]
\item \textbf{CMMLU} (+0.98): The Chinese counterpart of MMLU, it shows the most pronounced early-phase gap among all benchmarks---DSR trails WSD by a large margin throughout the stable phase, and only partially recovers during ramp-up.
\item \textbf{BigCodeBench (BCB)} (+2.81 over WSD, but $-1.40$ vs.\ Cosine-decay): Despite being a code benchmark, BCB exhibits a very large early-phase gap. BigCodeBench emphasizes practical code generation with diverse library/function calls and complex instructions. In our trajectories, its large early gap suggests that the tested capabilities may be more sensitive to API/library knowledge and function-call conventions than to the delayed signal accumulation observed in algorithmic code-generation benchmarks such as HumanEval+ and MultiPL-E. Notably, BCB is the only benchmark where Drop-Stable-Rampup underperforms Cosine-decay, suggesting that for tasks with this trajectory pattern, the extended low-batch-size phase is unnecessary and a smoother schedule may be preferable.
\end{itemize}

\paragraph{Benchmarks with moderate early-phase gaps.}
Several benchmarks fall between the two groups, exhibiting a moderate deficit during the stable phase that is largely recovered during ramp-up.

\begin{itemize}[nosep]
\item \textbf{MMLU} (+0.69): Measures broad factual knowledge across 57 subjects. A moderate early-phase gap, with the final gain being modest but positive.
\item \textbf{MMLU-Pro} (+0.93): An extended version of MMLU with harder distractors and 10-way multiple choice. Similar moderate early gap and final gain.
\item \textbf{BBH} (+1.73): Tests diverse compositional reasoning (logical deduction, causal reasoning, disambiguation). The moderate gap likely reflects a mixture of subtask types with different noise sensitivities.
\end{itemize}

\paragraph{Summary.}
The key empirical finding is a negative correlation between the early-phase gap (how much DSR trails WSD during the stable phase) and the final gain from Drop-Stable-Rampup. Benchmarks where the small batch size causes little performance deficit during the stable phase (GSM8K, MATH, HumanEval+, MultiPL-E) ultimately achieve the largest final improvements (+2.44 to +4.23). These correspond to compositional reasoning and algorithmic code generation. Benchmarks where the small batch size causes a large deficit (CMMLU, BCB) achieve smaller or negative final gains---the noise cost of the extended stable phase is not fully compensated by signal accumulation. This pattern is consistent with our theoretical prediction: tasks where signal accumulation is the bottleneck tolerate the noise of small batch size and benefit from extra gradient steps; tasks where noise dominates are harmed by the extended low-batch-size phase.

\paragraph{Implications for future work.}
The divergence in dynamics across tasks suggests a natural direction: \emph{task-aware data scheduling}. If the early-phase gap can be estimated from a small pilot run, one could tailor the data mixture to the schedule phase---prioritizing compositional data (math, code, chain-of-thought) during the stable phase to maximize signal accumulation, and shifting toward distributional content (world knowledge, encyclopedic text) during ramp-up. We leave the formal optimization of such per-domain scheduling as future work.

\end{document}